%% file: iclr2026_conference.tex
\documentclass{article} 
\usepackage{iclr2026_conference,times}

\definecolor{mylinkcolor}{RGB}{64,128,128}

\usepackage[backref=page,
    colorlinks=true,
    linkcolor=mylinkcolor,
    citecolor=mylinkcolor,
    filecolor=mylinkcolor,
    urlcolor=mylinkcolor]{hyperref}

\usepackage{url}
\usepackage{amssymb}
\usepackage[utf8]{inputenc}
\usepackage{mathtools}
\usepackage{wrapfig}
\usepackage{babel}
\usepackage{amsthm}
\usepackage[nameinlink,noabbrev,capitalize]{cleveref}
\usepackage{relsize}
\usepackage{tikz}
\usepackage{subcaption}
\usepackage{float}
\usepackage[framemethod=tikz]{mdframed}
\usetikzlibrary{arrows.meta, positioning, decorations.pathreplacing, calc, patterns, fit, backgrounds, automata, positioning}

\theoremstyle{plain}
\newmdtheoremenv[
  roundcorner=4pt,
  linewidth=0pt,
  backgroundcolor=brown!11,
  innertopmargin=9pt,
  innerbottommargin=6pt,
  innerleftmargin=6pt,
  innerrightmargin=6pt
]{theorem}{Theorem}
\newmdtheoremenv[
  roundcorner=4pt,
  linewidth=0pt,
  backgroundcolor=blue!6,
  innertopmargin=9pt,
  innerbottommargin=6pt,
  innerleftmargin=6pt,
  innerrightmargin=6pt
]{lemma}{Lemma}

\theoremstyle{definition}
\newmdtheoremenv[
  roundcorner=4pt,
  linewidth=0pt,
  backgroundcolor=gray!10,
  innertopmargin=9pt,
  innerbottommargin=6pt,
  innerleftmargin=6pt,
  innerrightmargin=6pt
]{definition}{Definition}

\newmdtheoremenv[
  roundcorner=4pt,
  linewidth=0pt,
  backgroundcolor=gray!10,
  innertopmargin=9pt,
  innerbottommargin=6pt,
  innerleftmargin=6pt,
  innerrightmargin=6pt
]{example}{Example}

\input{preamble}

\input{header}

\renewcommand*{\backref}[1]{}

\title{The Expressive Limits of Diagonal SSMs for State-Tracking}

\author{Mehran Shakerinava$^{*\rtimes \wr}$~~Behnoush Khavari$^{*\times \wr}$\\  \textbf{Siamak Ravanbakhsh$^{\rtimes \wr}$~~Sarath Chandar$^{\times \wr \ltimes}$} \\
Equal contribution$^*$~~ School of Computer Science, McGill University$^{\rtimes}$~~ Chandar Research Lab$^{\times}$ \\ Mila -- Quebec AI Institute$^{\wr}$~~ Polytechnique Montr\'{e}al$^{\ltimes}$ \\
{\small \texttt{mehranshakerinava@gmail.com}} \qquad
{\small \texttt{khavarib@mila.quebec}}}

\newcommand{\bad}{{\color{magenta}\ding{56}}}

\iclrfinalcopy 
\begin{document}

\maketitle

\begin{abstract}
State-Space Models (SSMs) have recently been shown to achieve strong empirical performance on a variety of long-range sequence modeling tasks while remaining efficient and highly-parallelizable. However, the theoretical understanding of their expressive power remains limited. In this work, we study the expressivity of input-Dependent Complex-valued Diagonal (DCD) SSMs on sequential state-tracking tasks. We show that single-layer DCD SSMs cannot express state-tracking of any non-Abelian group at finite precision. More generally, we show that $k$\babelhyphen{nobreak}layer DCD SSMs can express state-tracking of a group if and only if that group has a subnormal series of length $k$, with Abelian factors. That is, we identify the precise expressivity range of $k$\babelhyphen{nobreak}layer DCD SSMs within the solvable groups. Empirically, we find that multi-layer models often fail to learn state-tracking for non-Abelian groups, highlighting a gap between expressivity and learnability.
\end{abstract}

\section{Introduction}
Alternative architectures to Transformers are often motivated by efficiency and computational cost. Equally important, however, is the need to understand their failure modes. Addressing these failures is key to designing better models and requires analyzing three aspects: (1) the model's intrinsic expressive capacity, (2) whether standard learning algorithms (\eg gradient descent on finite data) can reliably realize solutions within that capacity, and (3) the extent to which these limitations actually transfer to or predict performance on real-world tasks. In this work, we focus on the first aspect, architectural expressivity, and provide empirical observations for the second aspect.

A particularly illustrative class of tasks where Transformers are known to fail is state-tracking~\citep{deletang2023neural,liu2023transformers,hahn2024sensitive,bhattamishra2022simplicity}. These tasks are closely related to regular languages in formal language theory and include simple tasks like parity and modular addition. State-tracking tasks are considered representative of a model's performance on real-world problems, such as code execution and program analysis. Examples of Transformers failing to generalize to out-of-distribution (OOD) inputs highlight how limited expressivity can lead models to rely on shortcut solutions that do not generalize beyond the training distribution~\citep{liu2023transformers}. An example of a sequence modeling task that requires keeping track of a state that is being manipulated is program state analysis from big-bench~\citep{bigbench_autodebugging}. For example, given the following code,
$$x, y, z = 0, 1, 2;\quad x, y = y, x;\quad y, z = z, y;\quad x, y = y, x
$$ what is the value of $x$ after the fourth command? This specific example requires the ability to model permutations of three objects, \ie the group $S_3$.

State Space Models (SSMs) have emerged as efficient alternatives to Transformers, promising linear-time sequence modeling with recurrent state representations~\citep{gu2021efficiently}. Initially, they were expected to outperform Transformers on state-tracking tasks, because of their recurrent statefulness. However, \citet{merrill2024illusion} showed that, despite having explicit state representations, SSMs still fail on these tasks, much like Transformers. They highlight this issue for time-invariant or diagonal SSMs in the context of tracking states over sequences of non-solvable group operations. Later work by~\citet{sarrof2024expressive,grazzi2024unlocking} reveals that these models are unable to track even solvable groups such as parity and modular counting, due to design limitations in their state transition matrices. We defer a detailed discussion of these findings to~\cref{section:related-work}.

\begin{wrapfigure}{r}{0.48\textwidth}
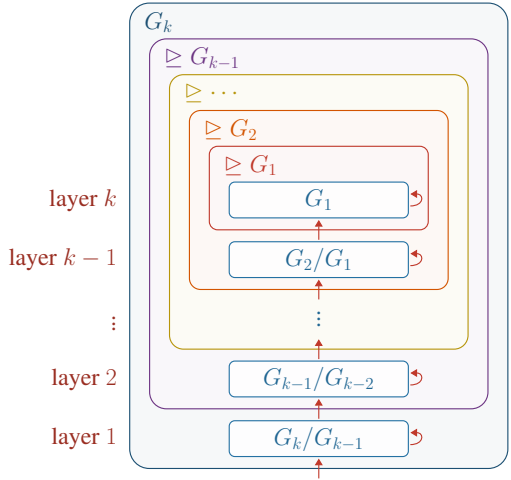
 
    \centering
    \includestandalone[width=\linewidth]{tikz/champion}
    \caption{A group $G_k$ can be tracked by a $k$\babelhyphen{nobreak}layer DCD SSM iff the group has a subnormal chain of length $k$ with Abelian factors. Each layer tracks one such Abelian factor as shown in the diagram.}
    \label{fig:champion}
\end{wrapfigure}

In this work, we study the expressivity of input-Dependent Complex-valued Diagonal (DCD) SSMs and show that a single-layer DCD SSM cannot track any non-Abelian group at finite precision. We then show that stacking additional diagonal layers allows the model to track a group if and only if that group has a subnormal series of length equal to the depth of the SSM, with Abelian factors.
This result establishes the strict benefit of depth for modeling non-Abelian groups with diagonal SSMs and identifies the precise expressivity range of $k$\babelhyphen{nobreak}layer DCD SSMs within the solvable groups.

Finally, we empirically evaluate diagonal SSMs, both single- and multi-layer, on a range of group state-tracking tasks, including Abelian groups, such as parity ($C_2$) and mod-$60$ addition ($C_{60}$), as well as non-Abelian groups, such as permutations of $3$
elements ($S_3$). In practice, we observe a clear gap between expressivity and learnability of multi-layer models on non-Abelian groups. This suggests that, although some of these models are theoretically expressive enough, they encounter training challenges with standard gradient-based optimization.

\section{Background}

\subsection{SSMs}

We review SSMs and provide background on some of the variants that we refer to in this paper. We begin by defining an SSM layer using notation inspired by previous works~\citep{sarrof2024expressive,grazzi2024unlocking}.

\vspace{0.3em}
\begin{definition}[SSM Layer]\label{def:ssm_layer}
A $d$-dimensional SSM layer is a parametrized function that takes as input a sequence of $x_t \in \F^n$ and outputs a sequence of $y_t \in \F^m$ via an affine recurrence:
\begin{align}
    h_{t} &= A(x_t)h_{t-1} + b(x_t),\label{eq:recurrence}\\
    y_t &= \dec(h_t, x_t),\label{eq:decode}
\end{align}
where $h_t \in \F^d$ is the state, $A(x_t) \in \F^{d \times d}$ is the transition matrix, $b(x_t) \in \F^d$ is the input vector, and $\dec: \F^d \times \F^n \to \F^m$ is the decoder. The learnable components of the SSM layer are $A$, $b$, $\dec$, and possibly the initial state $h_0$. If $A(x)$ is diagonal for all $x \in \F^n$, we say the SSM layer is diagonal. If $\F = \R$ and $A(x)$ has only real eigenvalues, we say the SSM layer is real. Otherwise, we say the SSM layer is complex.
\end{definition}
\vspace{-0.5em}

\paragraph{Computation} The state's affine recurrence can be efficiently parallelized with the parallel scan (\aka prefix sum) algorithm~\citep{blelloch1990prefix} to run in depth $O(\log T)$, as opposed to the naive $O(T)$, where $T$ is the sequence length. The parallel scan algorithm leverages the fact that the composition of affine maps simplifies into another affine map to compute the states in parallel. This is particularly efficient when $A(x)$ is diagonal, as matrix-matrix and matrix-vector multiplications reduce to element-wise multiplications.

Most variants of SSMs are captured by \cref{def:ssm_layer}.
We review some of the most relevant ones.

\paragraph{S4} The Structured State Space Sequential model (S4)~\citep{gu2021efficiently} is based on continuous-time linear time-invariant (LTI) state-space models from control theory. It is obtained from \cref{def:ssm_layer} by setting $\F = \C$, $A(x) = \Lambda$, $b(x) = Bx$, and $\dec(h, x) = \sigma(\Re(Ch) + Dx)$, where $\Lambda \in \C^{d\times d}$,
$B \in \C^{d \times n}$, $C \in \C^{m \times d}$, and $D \in \R^{m \times n}$ are learnable parameters, and $\sigma$ is a nonlinearity.

Careful initialization of the parameters, especially the transition matrix $\Lambda$, \eg via HiPPO~\citep{gu2020hippo}, allows these models to mitigate the vanishing gradient problem that affects classical RNNs. Moreover, the structure imposed on the $A$ matrix (normal plus low-rank) makes learning efficient. S4 achieved state-of-the-art results on a set of long-range sequence modeling tasks~\citep{tay2021long}, where transformers had previously struggled. As a result, it was seen as a promising alternative or complement to attention-based models.

S4 inspired several follow-up models. On the one hand, simpler variants such as DSS~\citep{gupta2022diagonal}, S4D~\citep{gu2022parameterization}, and S5~\citep{smith2023simplified} simplified S4's architecture while retaining strong performance. On the other hand, more sophisticated models such as H3~\citep{fu2022hungry} and Mamba~\citep{gu2024mamba} extended SSMs to handle a more diverse set of tasks, particularly language modeling.

\paragraph{S4D} S4D is a simplified version of S4 in which the complex transition matrix $A$ is constrained to be diagonal. This reduces the cost of matrix multiplication in the recurrence equations and hence leads to more efficient computation.

\paragraph{Mamba}
Designed specifically for language modeling tasks, Mamba introduces input dependence (also called selectivity) in the transition matrix while maintaining the diagonal structure of S4D. It is obtained from \cref{def:ssm_layer} by setting $\F = \R$, $A(x) = \exp(\Delta(x) \odot \Lambda)$, $b(x) = \Lambda^{-1}(\exp(\Delta(x) \odot \Lambda) - I)Bx$, and $\dec(h, x) = \sigma(C(x)h + D(x))$,
where $\Lambda \in \R^d_{< 0}$, $\Delta(x) \in \R^d_{\geq 0}$, $B \in \R^{d \times n}$, $C(x) \in \R^{m \times d}$, and $D(x) \in \R^m$ are learnable functions of the input $x$, and $\sigma$ is a nonlinearity. Note that Mamba's transition matrix $A(x)$ is real-valued and in $(0, 1]$ due to the constraints on $\Lambda$ and $\Delta(x)$. Another variant, due to \citet{grazzi2024unlocking}, is negative Mamba which simply replaces $A(x)$ with $2A(x) - I$ to bring the eigenvalue range to $(-1, 1]$.

\paragraph{AUSSM} The Adaptive Unitary SSM (AUSSM)~\citep{karuvally2025bridging} is an input-dependent complex diagonal SSM with unit-modulus transitions. It is obtained from \cref{def:ssm_layer} by setting $\F = \C$, $A(x) = \exp(i \Delta(x) \odot \Lambda(x))$, $b(x) = \Delta(x)B(x)$, and $\dec(h, x) = \sigma(C(x)h + D(x))$, where $\Delta(x) \in \R^d_{\geq 0}$, $\Lambda(x) \in \R^{d}$,
$B(x) \in \C^d$, $C(x) \in \C^{m \times d}$, and $D(x) \in \C^m$ are learnable functions of the input $x$, and $\sigma$ is a nonlinearity.

These models have been summarized in \cref{tab:ssm_variants}.

\begin{table}[t]
\centering
\begin{tabular}{lccc}
\toprule
\textbf{Model} & $A(x)$ & $B(x)$ & $\dec(h, x)$ \\
\midrule
S4 & $\Lambda$ & $Bx$ & $\sigma(\Re(Ch) + Dx)$ \\
S4D & $\Lambda$ (diagonal) & $Bx$ & $\sigma(\Re(Ch) + Dx)$ \\
Mamba & $\exp(\Delta(x) \odot \Lambda)$ & $\Lambda^{-1}\!\big(\exp(\Delta(x) \odot \Lambda) - I\big)Bx$ & $\sigma(C(x)h + D(x))$ \\
Negative Mamba & $2\exp(\Delta(x) \odot \Lambda) - I$ & $\Lambda^{-1}\!\big(\exp(\Delta(x) \odot \Lambda) - I\big)Bx$ & $\sigma(C(x)h + D(x))$ \\
AUSSM & $\exp(i \Delta(x) \odot \Lambda(x))$ & $\Delta(x)Bx$ & $\sigma(C(x)h + D(x))$ \\
\bottomrule
\end{tabular}
\caption{Summary of SSM variants in terms of their transition matrix $A(x)$, input vector $B(x)$, and decoder $\dec(h, x)$.}
\label{tab:ssm_variants}
\end{table}

\subsection{Groups, Automata, and State-Tracking}

A \emph{semigroup} is a set $\gp{G}$ equipped with an associative binary operation $\gop$, where the set is closed under the binary operation. If the operation has an identity element $e$ such that $e \gop g = g \gop e = g$ for all $g \in \gp{G}$, then $\gp{G}$ is a \emph{monoid}. If every element $g \in \gp{G}$ has an inverse $g^{-1}$ such that $g \gop g^{-1} = g^{-1} \gop g = e$, then $\gp{G}$ is a \emph{group}. A group is \emph{Abelian} if its operation is commutative, \ie $g_1 \gop g_2 = g_2 \gop g_1$ for all $g_1, g_2 \in \gp{G}$. A \emph{subgroup} $\gp{H} \leq \gp{G}$ is a subset closed under $\gop$ and inverses (equivalently, a group under the restricted operation).
A subgroup $\gp{N} \trianglelefteq \gp{G}$ is \emph{normal} if $g\gp{N}g^{-1} = \gp{N}$ for all $g\in\gp{G}$. Note that, by convention, juxtaposition such as $g\gp{N}$ or $g\gp{N}g^{-1}$ denotes the set obtained by multiplying each element of $\gp{N}$ by $g$ (and by $g^{-1}$ on the right, respectively).

When $\gp{N} \trianglelefteq \gp{G}$, the set $\gp{G}/\gp{N}\defeq\{g\gp{N}: g\in\gp{G}\}$ forms the \emph{quotient group}, with operation $(g\gp{N}) \gop (h\gp{N})=(g \gop h)\gp{N}$. A \emph{subnormal series} is a chain
\begin{equation}\label{eq:subnormal}
(\gp{G} = \gp{G}_k) \trianglerighteq \gp{G}_{k-1} \trianglerighteq ... \trianglerighteq \gp{G}_1 \trianglerighteq (\gp{G}_0=\{e\}).
\end{equation}
Its \emph{factors} are the quotients $\gp{G}_{i+1}/\gp{G}_i$.
A group is \emph{solvable} if it admits a subnormal series whose factors are all Abelian. A group is \emph{simple} if it has no non-trivial normal subgroups. See \citet{rotman2012introduction} for more details.

\vspace{0.3em}
\begin{example}\label{example:s3}
The group of all permutations of 3 items is denoted $S_3$ and consists of the elements $\set{e, (12), (23), (13), (123), (132)}$. It admits the subnormal series  
\begin{equation}
    \{e\} \triangleleft \gp{C}_3 \triangleleft \gp{S}_3,
\end{equation}
where $\gp{C}_3 = \set{e, (123), (123)^2}$ is the cyclic group of order 3. The factors $\gp{C}_3/\{e\} \cong \gp{C}_3$ and $\gp{S}_3/\gp{C}_3 \cong \gp{C}_2$ are both Abelian, so $\gp{S}_3$ is solvable.  
\end{example}
\vspace{-0.5em}

The set of state-transition functions of a finite automaton when receiving a (possibly empty) sequence of inputs, equipped with composition, forms a monoid. If the automaton is reversible, this monoid is in fact a group. The Krohn--Rhodes theorem~\citep{krohn1965algebraic} states that any finite automaton can be decomposed into a cascade product of simple groups and reset automata (\ie flip-flops). An automaton is called solvable if all the simple groups in its decomposition are solvable. In a cascade product of two automata, the state and input of the first automaton provide the input to the second, loosely resembling stacked neural network layers with skip-connections. The Krohn--Rhodes theorem thus reduces the simulation of complex automata to the simulation of simple groups, flip-flops, and their interactions. Since the latter two are relatively easy, the essential challenge lies in the system's ability to simulate groups, which is the focus of this work.

Sequential state-tracking tasks are meant to capture the ability to simulate semigroups. Given a semigroup $\gp{G}$ its state-tracking task is, given a sequence of elements $x_1, x_2, ..., x_T \in \gp{G}$, to output a sequence $y_1, y_2, ..., y_T \in \gp{G}$ such that $y_t = x_1 \gop x_2 \gop ... \gop x_t$. Some specific tasks of interest are parity, which corresponds to the cyclic group $\gp{C}_2$, and mod-$n$ counting, which corresponds to the cyclic group $C_n$.

\section{Related Work}\label{section:related-work}

\citet{merrill2024illusion} study the state-tracking capability of SSMs through the lens of circuit complexity. They show that both input-independent non-diagonal and input-dependent diagonal SSMs with real-valued transition matrices belong to the complexity class $\TC^0$, the class of problems solvable by constant-depth, polynomial-size circuits of \texttt{AND}, \texttt{OR}, and threshold gates with unbounded fan-in. This class is widely conjectured to be \emph{incapable of expressing non-solvable state-tracking tasks} such as tracking $S_5$. To address this, they propose either adding nonlinearities to the recurrence, which renders the model non-parallelizable, or making the recurrence non-diagonal and input-dependent. They thus propose an Input-Dependent S4 (IDS4) and empirically show that it performs better on non-solvable state-tracking tasks of relatively longer length. However, the drawbacks of the circuit-complexity approach are that it does not tell us which problems within $\TC^0$ \emph{can} be solved by these SSMs, and it relies on a conjecture in circuit complexity. In this work, we precisely describe the groups that can be tracked with a $k$\babelhyphen{nobreak}layer diagonal SSM (unconditional to any conjecture). In the limit of $k \to \infty$ all solvable groups can be tracked.

While~\citet{merrill2024illusion} focus on the limitations of SSMs on non-solvable groups, \citet{sarrof2024expressive} highlight a significant limitation on parity, which is the simplest non-trivial solvable group. They prove that input-dependent non-negative diagonal SSMs (\eg Mamba) cannot solve parity in finite precision for arbitrary sequence lengths. They also show that time-invariant complex-valued diagonal SSMs (\eg S4D) fail on parity. Connecting with the result of \citet{merrill2024illusion}, this means that common SSM models only cover a very small subset of $\TC^0$. On the positive side, \citet{sarrof2024expressive} show that a Mamba layer can simulate a flip-flop, and as a result, multi-layer Mamba can simulate counter-free automata.

Extending these results to non-diagonal models, \citet{grazzi2024unlocking} prove that a multilayer SSM can solve parity in finite precision for arbitrary sequence lengths only if at least one layer has a negative eigenvalue. This implies that even DeltaNet\footnote{DeltaNet is a linear attention model that can also be interpreted as an SSM, with a diagonal plus low-rank transition matrix. More specifically, a generalized Householder matrix.} fails to solve parity in its standard form. They argue that existing SSMs typically lack either input dependence or negative (resp. complex) eigenvalues. Having both of these properties in some layer is essential for solving parity (resp. modular counting)~\citep{khavari2025parity}. To address this, they modify Mamba and DeltaNet to allow eigenvalues in the range $[-1, 1]$ instead of $[0, 1]$. This leads to empirical improvements on both parity and real-world tasks.

While negative eigenvalues are sufficient for solving parity, \citet{grazzi2024unlocking} show that complex eigenvalues are necessary for harder tasks such as modular counting. Thus, although modifying Mamba to include negative eigenvalues enables it to solve parity, solving more complex tasks demands transition matrices with complex eigenvalues. They note that such matrices can be constructed by multiplying several real-eigenvalued matrices, provided the product is non-triangular.\footnote{Because the eigenvalues of a triangular matrix are the diagonal elements, and the product of two triangular matrices remains triangular.}

Building on this idea, \citet{siems2025deltaproduct} proposes DeltaProduct, an adaptive extension of DeltaNet that generalizes the transition matrix from diagonal-plus-rank-1 to a structured rank-$n$ matrix. Here $n$ is tunable to trade off between expressivity and efficiency. Their construction is based on products of $n$ generalized Householder matrices. A limitation of this approach is the computational cost of multiplying non-diagonal matrices.

\citet{karuvally2025bridging} propose the Adaptive Unitary SSM (AUSSM), which is a complex-valued input-dependent diagonal SSM with unit-modulus eigenvalues. They show that a single-layer AUSSM can simulate any Abelian group but it cannot simulate flip-flops. Since Mamba is able to simulate flip-flops, they propose interleaving Mamba and AUSSM layers to handle solvable automata (according to Krohn--Rhodes theory). However, it is currently unknown (1) if a single-layer AUSSM can simulate non-Abelian groups, (2) what the expressive capacity of $k$\babelhyphen{nobreak}layer AUSSM is, and (3) if it is unconditionally true\footnote{Unconditionally true means that the argument does not depend on any conjecture.} that multi-layer AUSSMs cannot simulate non-solvable groups. These are questions that we aim to address.

\section{Theoretical Results}

\subsection{Single-Layer DCD SSM}

We begin by analyzing the limitations of a single-layer input-Dependent Complex-valued Diagonal (DCD) SSM (recall \cref{def:ssm_layer}) on sequential state-tracking tasks for groups. We assume that $A$, $b$, and $\dec$ are universal function approximators. In this section, we build up to the following theorem.
\vspace{0.3em}\begin{theorem}\label{thm:single_layer}
There is a single-layer DCD SSM that tracks $\gp{G}$ at finite precision iff $\gp{G}$ is Abelian.
\end{theorem}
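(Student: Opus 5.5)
The plan is to prove the two directions separately. The ``if'' direction is an explicit construction; the ``only if'' direction is a commutativity argument that exploits finite precision.

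\emph{Sufficiency.} Let $\gp{G}$ be a finite Abelian group. By the structure theorem, $\gp{G} \cong \gp{C}_{n_1} \times \dots \times \gp{C}_{n_r}$. I would use a $d=r$ dimensional complex diagonal layer with $h_0 = \mathbf{1}$ and $b \equiv 0$. For an input $x = (a_1,\dots,a_r)$, set $A(x) = \mathrm{diag}(e^{2\pi i a_1/n_1},\dots,e^{2\pi i a_r/n_r})$. Then
\begin{equation*}
h_t = \mathrm{diag}\bigl(e^{2\pi i s_j/n_j}\bigr)_{j=1}^r \mathbf{1}, \qquad s_j = \textstyle\sum_{\tau\le t} a_{j,\tau} \bmod n_j .
\end{equation*}
The state therefore takes only $|\gp{G}|$ distinct values. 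These values lie on roots of unity, which are separated by a fixed positive distance, so any finite precision that resolves this distance suffices. Taking $\dec$ to map each such state to the corresponding group element, the layer outputs $y_t = x_1 \gop \dots \gop x_t$ exactly for every $T$.

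\emph{Necessity.} The key structural fact is that diagonal matrices commute. Write $A_g \defeq A(g)$ and $b_g \defeq b(g)$. Unrolling the recurrence gives
\begin{equation*}
h_T = \Bigl(\prod_{t} A_{x_t}\Bigr) h_0 + \sum_t \Bigl(\prod_{s>t} A_{x_s}\Bigr) b_{x_t}.
\end{equation*}
The first term is invariant under permuting the inputs. The second term is not, so I need a genuinely order-sensitive argument.

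Take $g,h$ with $gh \ne hg$. I would compare two input sequences: $u$ followed by $g,h$, and $u$ followed by $h,g$, for a suitable prefix $u$. The products $A_hA_g$ and $A_gA_h$ coincide. The affine parts differ only by the vector
\begin{equation*}
\delta = (A_h - I)b_g - (A_g - I)b_h,
\end{equation*}
since the map $h \mapsto A_h(A_g h + b_g) + b_h$ has affine part $A_h b_g + b_h$.

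If $\delta = 0$, the two composed affine maps are identical. Then for any prefix, $h_T$ is the same in both cases, so the decoder outputs the same $y_T$, while the correct outputs $y_T = \cdots gh$ and $\cdots hg$ differ. That is a contradiction.

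So the main obstacle is the case $\delta \neq 0$, which I would handle coordinatewise. In each coordinate $j$ with $\delta_j \ne 0$, either $(A_g)_j \ne 1$ or $(A_h)_j \ne 1$. Here finite precision has to do the work, via three sub-cases.

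\begin{itemize}
\item[(i)] If a diagonal entry has modulus greater than $1$, the state diverges. If the precision model keeps it bounded, it saturates, and I expect the information in that coordinate about order to be destroyed.
\item[(ii)] If the modulus is less than $1$, I would append a long suffix of the identity or of a repeated element. This contracts the difference exponentially below the precision threshold, while the correct outputs still differ. One has to choose a suffix $w$ with $ghw \neq hgw$ and with $|(A_w)_j|$ small, for example $w = g^{m\cdot\mathrm{ord}(g)}$ powers.
\item[(iii)] If the entry has modulus exactly $1$, it is a root of unity at finite precision. Then $A$ restricted to the states reachable at finite precision induces a group homomorphism from the free monoid into a finite group of diagonal, hence Abelian, matrices. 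I would instead argue via a pumping/periodicity argument: finite precision gives finitely many states, so the map $g \mapsto$ (affine map) lands in a finite monoid of affine maps with commuting linear parts. I would then show that, after a suffix $w$ that is a large power of a common period, the translation contributions become order-independent. In particular, repeating $(gh)^{N}$ versus $(hg)^{N}$-style constructions should show that $\delta$ must vanish on the relevant coordinates, because otherwise the states accumulate.
\end{itemize}

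I expect this last reduction, formalizing finite precision so that the affine maps form a finite commutative-linear monoid and deriving $\delta$-invariance, to be the crux.
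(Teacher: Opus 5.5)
Your sufficiency direction is the paper's construction. Your necessity strategy is a legitimate alternative to the paper's route: you compare the runs on $u\,g\,h\,w$ and $u\,h\,g\,w$ and look for a suffix $w$ that makes the final states coincide, whereas the paper first normalizes the SSM via \cref{lemma:contraction} and then forces a common rotation center per coordinate. Any $w$ works for you, since the targets satisfy $P_u\,gh\,P_w\neq P_u\,hg\,P_w$ by cancellation, where $P_u$ is the product of $u$.

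The gap is case (iii). You flag it as the crux but do not prove it, and the tools you suggest there do not work. Fix a coordinate $j$ and write $\lambda_g=(A_g)_j$, $\beta_g=(b_g)_j$, $f_g(x)=\lambda_g x+\beta_g$, and $c_g=\beta_g/(1-\lambda_g)$ when $\lambda_g\neq 1$. If $|\lambda_g|=|\lambda_h|=1$ and $\lambda_g,\lambda_h\neq 1$, then $\delta_j=(1-\lambda_g)(1-\lambda_h)(c_h-c_g)$. So (iii) is exactly the case of two neutral rotations about distinct centers, or else a drift $\lambda_g=1$, $\beta_g\neq 0$.
\begin{itemize}
\item Nothing accumulates under your proposed words. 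Here $gh$ acts as a neutral rotation by $\mu=\lambda_g\lambda_h$, so $(gh)^N$ stays bounded unless $\mu=1$. The gap between the $(gh)^N$ and $(hg)^N$ runs is $\delta_j(1-\mu^N)/(1-\mu)$, which is also bounded.
\item Commuting linear parts is automatic and beside the point, since $f_g$ and $f_h$ commute iff $c_g=c_h$.
\item In exact arithmetic $f_g,f_h$ generate an infinite monoid, so there is no finite monoid of affine maps to pump in.
\item ``$\delta$ must vanish on the relevant coordinates'' is the wrong target. A coordinate the decoder ignores may well have $\delta_j\neq 0$. As in your own case (i), what you need is that such a coordinate can be \emph{equalized} across the two runs.
\end{itemize}

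The missing idea is the mechanism of \cref{lemma:neutral_composition} and \cref{lemma:distinct_centers}. Find a word whose multipliers in coordinate $j$ cancel, so that it acts as a pure \emph{nonzero translation}, and iterate it until coordinate $j$ saturates from every starting value. In your notation this is clean: if $\lambda_g^p\approx 1$ and $\lambda_h^q\approx 1$, the word $g\,h\,g^{p-1}h^{q-1}$ acts on coordinate $j$ as the commutator $f_h^{-1}f_g^{-1}f_hf_g$. That is, it acts as $x\mapsto x+\delta_j/(\lambda_g\lambda_h)$, which is nonzero exactly when $\delta_j\neq 0$. This is slightly more robust than the paper's two-block word $g^{\alpha}h^{\beta}$ with $\lambda_g^{\alpha}\lambda_h^{\beta}\approx 1$, which can fail to have a solution with both factors nontrivial, e.g.\ $\lambda_g=-1$, $\lambda_h=e^{2\pi i/3}$. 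In the drift case, $g^N$ alone diverges.

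You also need to assemble the coordinatewise suffixes. Coordinates evolve independently and deterministically, so two runs that agree in a coordinate agree forever after. You can therefore neutralize the offending coordinates one at a time by concatenating suffixes. This ends with identical states on $u\,g\,h\,w$ and $u\,h\,g\,w$ but different targets, which is the contradiction. Case (i) also needs care: a run sitting exactly at the center of an expanding map does not diverge.
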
\vspace{-0.5em}

Having $b$ in the state recurrence makes the problem non-trivial. Without $b$, the state recurrence is given by $h_t = A(x_t)h_{t-1}$ and since $A(x_t)$ is assumed to be diagonal and since diagonal matrices commute, the expressed group operation has to be commutative, \ie the group is Abelian. However, with $b$, the SSM can perform non-commutative state updates. We show that even with $b$, a single-layer DCD SSM cannot express non-Abelian groups at finite precision. In other words, our result implies that having $b$ does not increase expressivity for tracking groups at finite precision.

Another point that makes the problem difficult is the existence of a decoder (especially a powerful one), as it can implement complex decision boundaries in the state space. For example, we cannot assume that feeding the sequence $(g, g^{-1})$ to the SSM layer results in the identity map on the state. As long as the initial state $h_0$ and the final state $h_2$ decode to the same
group element, \ie $\dec(h_0) = \dec(h_2)$, the SSM is treating the input sequence correctly.

An important property of SSMs (and linear RNNs in general) is that the state update corresponding to a finite \emph{sequence} of inputs simplifies into an affine map. More specifically, given an input sequence $\bar{x} = (x_1, x_2, ..., x_T)$, the state update from $t=0$ to $t=T$ of a diagonal SSM is given by
\begin{equation}
h_T = \lambda(\bar{x}) \odot h_0 + b(\bar{x}),
\end{equation}
where
\begin{align}
\lambda(\bar{x}) &\defeq \prod_{i=1}^{T} \lambda(x_i),\\
b(\bar{x}) &\defeq \sum_{i=1}^{T} \left( \prod_{j=i+1}^{T} \lambda(x_j) \right) b(x_i).
\end{align}

This is a key property that we use in our proofs. Importantly, note that we have now overloaded $\lambda$ and $b$ to also accept a \emph{sequence} of inputs.

First, we show that if some input $x$ causes some coordinate $j$ of the state to contract ($\abs{\lambda(x)_j} < 1$), expand ($\abs{\lambda(x)_j} > 1$), or drift ($\lambda(x)_j = 1 \land b(x)_j \neq 0$), then we can construct another SSM that still solves the task while keeping state-coordinate $j$ fixed. As a result, such state-coordinates are useless for tracking groups and can be effectively ignored by the decoder.

\textbf{Notation.} We sometimes use multiplicative notation for groups. For example, $g^3$ means $g \cdot g \cdot g$ and $gh$ means $g \cdot h$. Sometimes we may want to concatenate group elements to form a sequence. We use parentheses to denote sequences and write $\langle g \rangle$ for a sequence of length one consisting only of $g$. We use multiplicative notation for the concatenation of sequences as well. For example, $\langle g \rangle \langle h\rangle^5$ is a sequence of length six consisting of a single $g$ followed by five $h$ elements.
\vspace{0.3em}\begin{lemma}\label{lemma:contraction}
Let $M$ be a single-layer DCD SSM that tracks group $\gp{G}$ at finite precision with $\abs{\lambda(x)_j} \neq 1$ or $\lambda(x)_j = 1 \land b(x)_j \neq 0$ for some $x \in \gp{G}$ and $j \in [d]$, then there exists another single-layer DCD SSM $\widetilde{M}$ that also tracks group $\gp{G}$ at finite precision with $\tilde{\lambda}(g)_j = 0$ and $\tilde{b}(g)_j = c$ (for some constant $c$) for all $g \in \gp{G}$.
\end{lemma}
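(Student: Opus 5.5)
The plan is to work one coordinate $j$ at a time. Because $A$ is diagonal, coordinate $j$ evolves as the scalar affine recurrence $h_{t,j} = \lambda(x_t)_j h_{t-1,j} + b(x_t)_j$, independently of the other coordinates. So I can change how $M$ treats coordinate $j$ without touching the rest of the state.

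The tool is the following observation. Let $n$ be the order of the given $x$ in $\gp{G}$ and set $u = \langle x \rangle^n$. Then $u^m$ multiplies to the identity of $\gp{G}$ for every $m$. Hence any input sequence can have copies of $u^m$ inserted without changing the group products that $M$ must output. On coordinate $j$, the sequence $u$ acts as the scalar affine map
\[
\phi(z) = \mu z + \beta, \qquad \mu = \lambda(x)_j^n ,
\]
with $\beta = b(u)_j$. The hypothesis means that either $\abs{\mu}\neq 1$, or $\mu = 1$ and $\beta = n\,b(x)_j \neq 0$. (If $\abs{\lambda(x)_j}=1$ but $\lambda(x)_j \neq 1$, then only the drift condition $\lambda(x)_j=1$ is assumed, so these are the only cases.) I split into the three cases according to the behaviour of $\phi$.

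\textbf{Expanding ($\abs{\mu}>1$) or drifting ($\mu=1$, $\beta\neq 0$).} Every reachable state $h$ of $M$ can be followed by $u^m$ for arbitrary $m$. Its $j$-th coordinate then becomes $\phi^m(h_j)$, which is $\mu^m(h_j - p) + p$ with $p = \beta/(1-\mu)$ in the expanding case, and $h_j + m\beta$ in the drifting case. This grows without bound unless $h_j = p$ in the expanding case; in the drifting case it always grows without bound. At finite precision only finitely many (bounded) values are representable, so $M$ cannot tolerate this.
\begin{itemize}
\item In the expanding case, every reachable state therefore already has $h_j = p$.
\item The drifting case cannot occur at all, since it would force unbounded states.
\end{itemize}
Setting $\tilde\lambda(g)_j = 0$ and $\tilde b(g)_j = p$ then reproduces exactly the same reachable trajectories, so $\widetilde M = M$ on all inputs. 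This also needs $h_{0,j} = p$, which holds because $h_0$ is itself reachable.

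\textbf{Contracting ($\abs{\mu}<1$).} Here I define $\widetilde M$ to simulate $M$ on the modified input in which every symbol $g$ is replaced by the block $\langle g\rangle u^m$ for a fixed large $m$. Concretely:
\begin{itemize}
\item Set $\tilde\lambda(g) = \lambda(\langle g\rangle u^m)$ and $\tilde b(g) = b(\langle g\rangle u^m)$. These are again diagonal affine maps by the composition formula above.
\item Set $\widetilde{\dec}(h,g) = \dec(h,x)$, because the last symbol $M$ reads in each block is $x$.
\end{itemize}
Since $g\,u^m = g$ in $\gp{G}$, the state after each block is the state $M$ reaches on a sequence whose running product is the correct prefix product. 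Hence $\widetilde M$ tracks $\gp{G}$. On coordinate $j$, we get $\tilde\lambda(g)_j = \lambda(g)_j\,\mu^m$ and
\[
\tilde b(g)_j = \mu^m b(g)_j + \beta\,\frac{1-\mu^m}{1-\mu} .
\]
These tend to $0$ and $p$ respectively, uniformly over the finite set $\gp{G}$. Choosing $m$ large enough that, at the working precision, they round exactly to $0$ and $p$ gives the claimed form with $c = p$. The other coordinates keep whatever values the construction gives them, which is harmless because the lemma only constrains coordinate $j$.

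\textbf{Main obstacle.} I expect the main difficulty to be making ``finite precision'' rigorous. I need a model in which representable values form a finite bounded set and arithmetic is rounded. The expanding and drifting cases rely on boundedness; the contracting case relies on replacing the rounded iterates by the exact limits $0$ and $p$ without altering the decoder's outputs. In the contracting case I would first try to argue via the eventual periodicity of iterates in a finite set. Failing that, I would take the precision model to be such that the rounded parameters $\tilde\lambda(g)_j,\tilde b(g)_j$ are exactly $0$ and $p$ once $m$ is large.
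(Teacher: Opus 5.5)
Your contracting case is the paper's argument. You pad every symbol $g$ with $u^m$, where $u=\langle x\rangle^n$ and $x^n=e$, and take the composite affine map as the new per-token transition. Your use of $\dec(h,x)$ for the decoder's second argument handles a detail the paper omits.

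\textbf{The gap is in the expanding/drifting case.} The inference ``only finitely many bounded values are representable, so $M$ cannot tolerate this'' is not valid as stated. A bounded representable set only means the computed iterates stay inside that set. ``Tracks $G$'' constrains only the decoded outputs. So no contradiction arises unless you add a convention that overflow invalidates the run. The paper's proof of this lemma adopts the opposite convention: a value whose magnitude exceeds a threshold is fixed to \texttt{inf}. Under that convention both of your claims fail. For example, take $G=C_2$ with two coordinates:
\begin{itemize}
\item a parity coordinate with $\lambda(g)_1=\pm 1$, $b(g)_1=0$, $h_{0,1}=1$, which the decoder reads;
\item a second coordinate with $\lambda(g)_2=2$, $b(g)_2=0$, $h_{0,2}=1$, which the decoder ignores.
\end{itemize}
This tracks $C_2$ at finite precision and satisfies the lemma's hypothesis. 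Yet the reachable values of coordinate $2$ are $1,2,4,\dots,\texttt{inf}$, not the fixed point $p=0$. Replacing the second coordinate by $\lambda(g)_2=1$, $b(g)_2=1$ likewise gives a tracking model in which drift does occur. So ``every reachable state already has $h_j=p$'' and ``the drifting case cannot occur at all'' do not follow from finite precision.

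\textbf{The fix is to reuse your contracting-case padding here,} which is what the paper does. Replacing each $g$ by $\langle g\rangle u^m$ preserves every running product, because $u^m$ multiplies to $e$. For large $m$ it pushes coordinate $j$ past the saturation threshold after every token. This gives $\tilde\lambda(g)_j=0$ and $\tilde b(g)_j=c$ with $c=\texttt{inf}$, except for nongeneric states landing exactly on the repelling fixed point, which the paper also glosses over. If you want to keep your pinning argument instead, you must state up front a precision model in which overflow counts as failure to track. In that model your argument works and even yields a finite $c=p$ (or vacuity in the drift case). But it is a different convention from the one the paper's proof uses, and you cannot get it from ``finitely many representable values'' alone.
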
\vspace{-0.5em}
A proof is provided in \cref{apx:proof_contraction}.

Repeating the lemma above for all coordinates $j \in [d]$ lets us construct another SSM that tracks the same group but has no contracting, expanding, or drifting coordinates. Therefore, we can now assume that all coordinates of the state have neutral rotation dynamics (case 2 in \cref{sec:1d-complex-dynamics}) for all inputs. Next, we show that if two inputs have neutral rotation dynamics with \emph{distinct} centers of rotation, then the SSM can diverge. The following lemma is useful.

\vspace{0.3em}\begin{lemma}\label{lemma:neutral_composition}
The composition of neutral rotations $h \mapsto \lambda (h - c_1) + c_1$ and $h \mapsto \lambda^* (h - c_2) + c_2$, where $\lambda^*$ is the conjugate of $\lambda$, with distinct centers of rotation $c_1 \neq c_2$ is a non-zero translation.
\end{lemma}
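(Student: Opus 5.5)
Since the two maps are one-dimensional complex affine maps, their composition is again affine, and the plan is simply to compute it and read off its linear part and translation. The statement is implicitly about the neutral-rotation regime of the previous discussion, so I will take $\abs{\lambda}=1$ throughout. In that case $\lambda^* = \lambda^{-1}$, so the linear parts cancel. Apply the first map and then the second (the other order is symmetric, with the roles of $c_1$ and $c_2$ swapped):
\begin{equation*}
h \;\mapsto\; \lambda^*\bigl(\lambda(h-c_1)+c_1-c_2\bigr)+c_2 \;=\; \abs{\lambda}^2 h \;-\; \abs{\lambda}^2 c_1 \;+\; \lambda^*(c_1-c_2) \;+\; c_2 .
\end{equation*}
Substituting $\abs{\lambda}^2=1$ reduces this to
\begin{equation*}
h \;\mapsto\; h + (1-\lambda^*)(c_2-c_1).
\end{equation*}
The composition is therefore a translation by $\tau \defeq (1-\lambda^*)(c_2-c_1)$. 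Composing in the opposite order gives $\tau' = (1-\lambda)(c_1-c_2)$ by the same computation.

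It remains to show $\tau \neq 0$. This is a product of two complex factors, and $c_2 - c_1 \neq 0$ holds by hypothesis. We also need $1-\lambda^* \neq 0$, i.e.\ $\lambda \neq 1$. This follows from the reduction performed just before the lemma. After \cref{lemma:contraction}, every surviving coordinate has neutral rotation dynamics in the sense of case 2 of \cref{sec:1d-complex-dynamics}. Such a coordinate is either genuinely rotating with $\lambda\neq 1$, or it has $\lambda = 1$ and $b=0$. In the latter case the map is the identity, and its ``center'' is not uniquely defined. So whenever the lemma speaks of a rotation with a well-defined center $c_i$, the multiplier satisfies $\abs{\lambda}=1$ and $\lambda\neq 1$. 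That is exactly what makes $c_i = b/(1-\lambda)$ meaningful.

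I will state these standing assumptions explicitly at the start of the proof: $\abs{\lambda}=1$, $\lambda\neq1$, and $c_1\neq c_2$. Under them both factors of $\tau$ are nonzero, so the translation is nonzero. There is no real obstacle beyond pinning down these implicit hypotheses.

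I will also note briefly how the lemma will be used afterwards. Iterating the composite map $n$ times translates the state by $n\tau$, which is unbounded. At finite precision this contradicts the requirement that the state stay in a bounded or finite set, and that drives the subsequent divergence argument.
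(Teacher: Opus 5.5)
Your proposal is correct and matches the paper's proof: you expand the composite affine map the same way, use $\lambda\lambda^* = 1$ to reduce it to $h \mapsto h + (1-\lambda^*)(c_2 - c_1)$, and conclude nonvanishing from $\lambda^* \neq 1$ and $c_1 \neq c_2$. Stating the hypotheses $\abs{\lambda}=1$ and $\lambda\neq 1$ explicitly is a reasonable clarification. They follow directly from the definition of neutral rotation in case 2 of \cref{sec:1d-complex-dynamics}, so you do not need the detour through \cref{lemma:contraction}.
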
\vspace{-0.5em}

A proof is provided in \cref{apx:proof_neutral_composition}.

The above lemma gives us a strategy to make certain SSM configurations diverge.

\vspace{0.3em}\begin{lemma}\label{lemma:distinct_centers}
If a single-layer DCD SSM tracks group $\gp{G}$ at finite precision and there exist two inputs $g, h \in \gp{G}$ that induce neutral rotation about distinct centers in some coordinate $j \in [d]$ of the state, then the SSM diverges on some input sequence.
\end{lemma}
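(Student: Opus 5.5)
We will use the finiteness of $\gp{G}$ to get an inverse-like rotation and then apply \cref{lemma:neutral_composition}. In coordinate $j$ the input $g$ acts as $z \mapsto \lambda_g (z - c_g) + c_g$ with $\abs{\lambda_g}=1$, and $h$ acts as $z \mapsto \lambda_h (z - c_h) + c_h$, with $c_g \neq c_h$. Rotations about the same center compose into a rotation about that center, so the sequence $\langle g\rangle^m$ acts as $z\mapsto \lambda_g^m(z-c_g)+c_g$.

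\textbf{Step 1: build conjugate rotations.} We need rotations about $c_g$ and $c_h$ whose angles are conjugate. If $\lambda_g$ is a root of unity, choose $m$ with $\lambda_g^m = 1$. Then $\langle g \rangle^{m}$ acts as the identity, and $\langle g\rangle^{m-1}$ acts as rotation about $c_g$ by $\lambda_g^{-1}=\lambda_g^*$. If $\lambda_g$ is not a root of unity, we take the pair $\langle g\rangle$ and $\langle h\rangle^{n}$ with $\lambda_h^{n}$ close to $\lambda_g^*$. Exact conjugacy fails in that case, so we instead argue with the general composition $z\mapsto \lambda_g\lambda_h^n(z-c_h)+\lambda_g(c_h-c_g)+c_g$.

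The cleaner route is to first pick a word with total rotation factor $1$. The sequence $u = \langle g\rangle\langle h\rangle^{n}$ acts on coordinate $j$ as $z \mapsto \lambda_g\lambda_h^{n} z + \beta_n$. Take $v=\langle h\rangle^{n}\langle g\rangle$, the same letters in a different order. Both $u$ and $v$ have multiplier $\mu=\lambda_g\lambda_h^n$ but different fixed points. To get exact conjugacy when $\mu\neq 1$, use $u^{k}$ and $v^{\ell}$: since $\{\mu^k\}$ is either finite or dense, one can arrange for the rotations to be mutually inverse, either exactly or approximately with a controlled error. The case where $\lambda_g,\lambda_h$ are both roots of unity is the main one. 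There Step 1 works directly: take $\langle g\rangle$ and $\langle h\rangle^{r}$ with $\lambda_h^r=\lambda_g^{*}$, which exists if $\lambda_g^*$ lies in the cyclic group generated by $\lambda_h$. Otherwise use $\langle g\rangle^{a}$ and $\langle h\rangle^{b}$ with $\lambda_g^a\lambda_h^b=1$ and $\lambda_g^a\neq1$, which always exist because both have finite order. If $\lambda_g^a = 1$ is forced, then the composed word $\langle g\rangle\langle h\rangle$ gives a rotation about a third center, and we reapply the argument to it.

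\textbf{Step 2: divergence.} Let $w=\langle g\rangle^{a}\langle h\rangle^{b}$. By \cref{lemma:neutral_composition} it acts on coordinate $j$ as translation by a nonzero $\tau$. Feeding $w^N$ then gives $h_j = h_{0,j} + N\tau$, which is unbounded as $N\to\infty$. So the SSM diverges on the input sequences $w^N$, which is exactly the claim. This holds even though the group element represented by $w^N$ cycles through a finite set. The only role of the tracking and finite-precision hypothesis is to guarantee the setup inherited from \cref{lemma:contraction}, namely that all coordinates are neutral rotations.

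\textbf{Main obstacle.} The hard part is Step 1 when the rotation angles are irrational multiples of $\pi$. There exact conjugate powers may fail to exist. We would first try to exclude this case, arguing that the hypothesis already forces roots of unity. Since $g^{|\gp{G}|}=e$, consider the sequence $\langle g\rangle^{|\gp{G}|N}$. If $\lambda_g$ were not a root of unity, the state would visit a dense set on a circle while the target output stayed fixed; together with the other inputs, this would contradict finite-precision decoding. If that argument does not go through, we fall back on the inexact composition: a nonzero translation part persists under a small rotational perturbation, and iterating a near-identity rotation about a far-away center still yields arbitrarily large displacement.
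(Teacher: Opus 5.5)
Your proposal follows the paper's proof. You pick powers $\langle g\rangle^{a}$, $\langle h\rangle^{b}$ with $\lambda_g^{a}\lambda_h^{b}=1$ (exactly for roots of unity, to within precision otherwise) and $\lambda_g^{a}\neq 1$. You then apply \cref{lemma:neutral_composition} to get a nonzero translation in coordinate $j$, and iterate that word to diverge.

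Your caution about such exponents being forced to be trivial is warranted. For coprime orders, such as $\lambda_g=-1$ and $\lambda_h=e^{2\pi i/3}$, the paper's claim that suitable $\alpha_1,\alpha_2$ exist ``exactly'' for rational rotations fails. Your repair closes that case: the word $\langle g\rangle\langle h\rangle$ is a rotation about a center distinct from $c_g$, and its multiplier $\lambda_g\lambda_h$ has order equal to the product of the two orders, so it can be paired with $\langle g\rangle$.
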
\vspace{-0.5em}

A proof is provided in \cref{apx:proof_distinct_centers}.

By putting the lemmas together, we can prove \cref{thm:single_layer}. A proof is provided in \cref{apx:proof_single_layer}.

\subsection{Multi-Layer DCD SSM}

In this section, we extend our results to the multi-layer setting, where the input to the $r$th layer is the input token $x$ and the states of the previous $r-1$ layers, denoted $h^{(1)}$, ..., $h^{(r-1)}$. We aim to prove the following theorem.

\vspace{0.3em}\begin{theorem}\label{thm:multi_layer}
There is a $k$\babelhyphen{nobreak}layer DCD SSM that tracks $\gp{G}$ at finite precision iff one can write
\begin{equation}
(\gp{G} = \gp{G}_k) \trianglerighteq \gp{G}_{k-1} \trianglerighteq ... \trianglerighteq \gp{G}_1 \trianglerighteq (\gp{G}_0 = \set{e})
\end{equation}
where $\gp{G}_{i+1}/\gp{G}_{i}$ is Abelian for all $i \in [k]$.
\end{theorem}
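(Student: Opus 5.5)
We prove the two directions separately. Sufficiency is a construction by induction on $k$; necessity is an induction on depth that applies the single-layer analysis (\cref{thm:single_layer} and its supporting lemmas) to the first layer.

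\textbf{Sufficiency.} Suppose $\gp{G}=\gp{G}_k \trianglerighteq \dots \trianglerighteq \gp{G}_0=\{e\}$ has Abelian factors. Note that the factors $\gp{G}_{i+1}/\gp{G}_i$ are only quotients of subgroups, not quotients of $\gp{G}$, so we cannot simply let each layer track one factor of $\gp{G}$ directly.

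Instead, let the first layer track the prefix products in the Abelian group $\gp{G}/\gp{G}_{k-1}$. The single-layer direction of \cref{thm:single_layer} does this: decompose the Abelian group into cyclic factors and use unit-modulus diagonal entries $e^{2\pi i a/n}$ with $b=0$.

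Next choose a transversal $\{t_c\}$ of the cosets of $\gp{G}_{k-1}$. The prefix product factors as $y_t = t_{c_t}\, n_t$ with $n_t\in \gp{G}_{k-1}$, where $c_t$ is the coset read off from layer 1. The recursion is $n_t = t_{c_t}^{-1} t_{c_{t-1}} n_{t-1} x_t$. Rewrite it as $n_t = \phi_{t}(n_{t-1})\, m_t$, where $\phi_t$ is conjugation by $t_{c_{t-1}}^{-1}\cdots$ and $m_t\in\gp{G}_{k-1}$ is a function of $(c_{t-1},x_t)$.

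This is awkward because the update is a twisted product. To avoid it, we track "untwisted" quantities instead. Set $z_t \defeq t_{c_t}^{-1}\, y_t$ expressed via the product $u_t\defeq \prod_{s\le t} (y_{s-1}x_s y_s^{-1})$-type conjugated increments. Concretely, $y_t = (y_{t-1} x_t y_{t-1}^{-1})\cdots$ is a product of conjugates of the inputs; modulo the top coset these conjugates lie in $\gp{G}_{k-1}$, and the conjugating element depends only on information available from earlier layers.

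Each conjugated increment is a function of the current token and the coset state from previous layers, so it is an admissible input to the next layer. Induction on $k$ then applies to the group $\gp{G}_{k-1}$, which has a series of length $k-1$, acting on these increments. The decoder of the top layer combines the coset and the $\gp{G}_{k-1}$ component to output $y_t$. Skip connections (each layer sees $x$ and all lower states) make all of this available.

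\textbf{Necessity.} Suppose a $k$-layer DCD SSM tracks $\gp{G}$. The first layer receives only the tokens $x\in\gp{G}$, so it is a single-layer DCD SSM driven by $\gp{G}$. Apply \cref{lemma:contraction} and \cref{lemma:distinct_centers} to it.

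The aim is to show that, without loss of generality, layer 1 acts on its non-trivial coordinates by rotations about a common center. Its finite-precision state is then a function of $\rho(y_t)$ for a homomorphism $\rho$ from $\gp{G}$ to an Abelian group of commuting diagonal unitaries. The nontrivial point is that layer 1 is not asked to output $y_t$, so we must rerun the lemmas with "tracks" weakened to "the whole network still tracks".

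Let $N=\ker\rho$, which is normal in $\gp{G}$ with $\gp{G}/N$ Abelian. Restrict attention to input sequences from $N$ started from an arbitrary reachable state. On such inputs layer 1's state is constant (a rotation by $\rho(n)=1$), so layers $2,\dots,k$ form a $(k-1)$-layer DCD SSM that tracks $N$, up to a fixed left-multiplication by the prefix. Induction then gives a series for $N$ of length $k-1$ with Abelian factors, and prepending $\gp{G}\trianglerighteq N$ yields the required series. The base case $k=1$ is \cref{thm:single_layer}.

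\textbf{Main obstacle.} The hard step is the layer-1 reduction in the necessity direction. In \cref{lemma:contraction} and \cref{lemma:distinct_centers}, divergence or replacement of a coordinate must now be argued through downstream layers, not a decoder. Divergence of a layer-1 coordinate is still impossible at finite precision, since a diverging coordinate would require unbounded states.

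The contraction/drift replacement, however, must preserve the inputs seen by higher layers. Our first attempt is to show that the higher layers, at finite precision, can only depend on a contracting coordinate's limiting value, so that value can be frozen. This would mirror the decoder argument of \cref{lemma:contraction}, with the composite "decoder" being the rest of the network restricted to long, padded sequences such as $\langle g\rangle^{|G|m}$ insertions, which act as the identity in $\gp{G}$.
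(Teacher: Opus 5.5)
Your necessity direction has a genuine gap. You claim that once layer~1 acts by rotations about common centers, its state is a function of $\rho(y_t)$ for a homomorphism $\rho$ on $\gp{G}$. You then take $N=\ker\rho$ and feed layers $2,\dots,k$ with tokens from $N$. But the map $x\mapsto\lambda^{(1)}(x)$ only extends to a monoid homomorphism on input \emph{sequences}. Nothing forces it to respect the relations of $\gp{G}$, precisely because (as you note yourself) layer~1 is never required to decode $y_t$.

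For example, take $\lambda^{(1)}(x)=-1$ and $b^{(1)}(x)=0$ for every token. This is an admissible rotation about a common center (it records the sequence length mod $2$), and it can sit under a layer that tracks an Abelian $\gp{G}$ directly. Here $\lambda^{(1)}(e)=-1$, the layer-1 state is not a function of $y_t$, and no single token fixes layer~1. The same failure occurs for $\gp{G}=C_2=\set{e,g}$ with $\lambda^{(1)}(g)=i$. So $\ker\rho$ is not defined on $\gp{G}$, and restricting to ``inputs from $N$'' does not freeze layer~1.

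The right object is the set of products of input sequences that return layer~1 to a fixed state. That this set is a subgroup is easy. That it is \emph{normal with Abelian quotient} is the real content of this direction, and your proposal gives no argument for it. The paper gets it in three steps:
\begin{itemize}
\item It restricts to a strongly connected component $\mathcal{C}$ of the finite state graph.
\item It shows the decoding sets $\dec(x_1)$ of first-layer states are disjoint or equal, hence cosets of $\gp{G}_{k-1}=\dec([x_e])$.
\item It uses that $(a,g_1,a^{-1})$ and $(g_1,a,a^{-1})$, resp.\ $(a,b)$ and $(b,a)$, drive layer~1 to the same state.
\end{itemize}
The peeling step then drives layers $2,\dots,k$ with \emph{sequences} from the set $\mathcal{X}$ of inputs fixing a first-layer state $x_0$, whose products are exactly $\gp{G}_{k-1}$. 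It does not use single tokens from a kernel.

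Two smaller points. First, the layer-1 simplification you list as your main obstacle does not require showing that higher layers depend only on a limiting value. The paper's \cref{lemma:circle_per_state} uses the same identity-product padding as \cref{lemma:contraction}: replace each token $g$ by $\langle g\rangle\langle x\rangle^n$ with $x^n=e$. This preserves all products and hence preserves tracking by the whole network.

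Second, in the sufficiency direction your idea is the paper's, but the formulas you write do not implement it:
\begin{itemize}
\item $z_t=t_{c_t}^{-1}y_t$ is the twisted coordinate again.
\item $y_{s-1}x_sy_s^{-1}=e$ identically.
\item Conjugating by the full prefix $y_{s-1}$ uses information layer~1 does not have.
\end{itemize}
Put the $\gp{G}_{k-1}$-component on the left instead. Then $y_t=n_t\,t_{c_t}$ gives $n_t=n_{t-1}\kappa_t$ with $\kappa_t=t_{c_{t-1}}\,x_t\,t_{c_t}^{-1}\in\gp{G}_{k-1}$, which depends only on $(c_{t-1},x_t)$. This is exactly the paper's $\kappa(h',g)=s(h')\,g\,s(h'h)^{-1}$, after which the induction hypothesis for $\gp{G}_{k-1}$ applies.
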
\vspace{-0.5em}

We begin by applying the same approach as the single-layer case to construct another SSM with simpler state space and state transitions that tracks the same group.

\vspace{0.3em}\begin{lemma}\label{lemma:circle_per_state}
If a $k$\babelhyphen{nobreak}layer DCD SSM tracks group $\gp{G}$ at finite precision, then there exists another $k$\babelhyphen{nobreak}layer DCD SSM that also tracks group $\gp{G}$ at finite precision, where, for all layers $r \in [k]$ and all state-coordinates $j \in [d]$, the transition dynamics is fixed or a neutral rotation about a center that is a function of the states of previous layers ($h^{(1)}$, $...$, $h^{(r-1)}$).
\end{lemma}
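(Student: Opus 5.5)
We prove \cref{lemma:circle_per_state} by induction on the layer index $r$, rewriting one layer at a time and leaving the layers below it untouched. The key observation is that once the states $h^{(1)},\dots,h^{(r-1)}$ of the lower layers are fixed, layer $r$ is a single-layer DCD SSM driven by the enlarged input $z = (x, h^{(1)}, \dots, h^{(r-1)})$. We assume the lower layers have already been simplified. Then each coordinate of a lower-layer state only takes finitely many values on trajectories that arise from group sequences: it is either fixed, or it lies on a finite orbit of a neutral rotation. This holds at finite precision, since there are finitely many distinguishable points. So layer $r$ sees a finite alphabet of effective inputs $z$, and within layer $r$ every such $z$ acts on coordinate $j$ as a one-dimensional affine map $h \mapsto \lambda(z)_j h + b(z)_j$.

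For layer $r$ and coordinate $j$ we redo the argument of \cref{lemma:contraction}, now with inputs $z$ in place of group elements. Suppose some reachable $z$ has $\abs{\lambda(z)_j}\neq 1$, or has $\lambda(z)_j=1$ with $b(z)_j\neq 0$. As in the single-layer case, we repeat a suitable group word. This drives coordinate $j$ either to a limit point (contraction), off to infinity (expansion or drift), or out of the finite-precision representable range. Any of these outcomes lets us argue that the decoder, and every later layer, cannot rely on coordinate $j$ to separate group elements. We then replace coordinate $j$ by the fixed dynamics $\tilde\lambda_j = 0$, $\tilde b_j = c$, and we absorb the dependence that later layers and the decoder had on it into their universal-approximator maps. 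Once no such $z$ remains, every reachable $z$ acts on coordinate $j$ as a neutral rotation $h\mapsto \lambda(z)_j(h - c(z)) + c(z)$ with $\abs{\lambda(z)_j}=1$, whose center is $c(z)=b(z)_j/(1-\lambda(z)_j)$. The case $\lambda(z)_j = 1$, $b(z)_j = 0$ is the identity, which counts as fixed dynamics.

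Next we use \cref{lemma:neutral_composition} and the argument of \cref{lemma:distinct_centers} to remove the dependence of the center on the current token $x$. Fix the lower-layer state $(h^{(1)},\dots,h^{(r-1)})$, and suppose two tokens $g,h$ give rotations about distinct centers. Alternating them, and padding with a word that returns the lower layers to the same state, produces a nonzero net translation. Iterating that translation makes layer $r$ diverge, which contradicts finite-precision tracking. Hence the center depends only on $h^{(1)},\dots,h^{(r-1)}$, which is exactly the claim of the lemma.

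The main obstacle is the step that closes the cycle in the previous paragraph. To repeat the translation, the lower-layer states must return to the same configuration while layer $r$ sees the $g/h$ alternation (and the conjugate-angle pair required by \cref{lemma:neutral_composition}). The alternation is fed through the token $x$, so it changes the lower-layer states. My first attempt will rely on the group being finite: for any word $w$, the power $w^{|\gp{G}|}$ acts as the identity on the group element, and because the lower layers are already simplified into finite rotations, a further power also returns every lower-layer coordinate to its starting value. Periodicity of the lower layers then reduces the problem to the single-layer situation, in which the composed affine map on coordinate $j$ over one period is either a nonzero translation, giving divergence, or has a consistent center. A secondary subtlety is checking that zeroing a coordinate in layer $r$ does not break later layers. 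This requires the replacement argument of \cref{lemma:contraction} to go through with the later layers' maps treated as part of the decoder.
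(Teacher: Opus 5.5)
\textbf{There is a genuine gap.} You run both tests, the contraction/expansion/drift test and the distinct-centers test, on single effective inputs $z=(x,h^{(1)},\dots,h^{(r-1)})$.

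\textbf{Why the single-step tests fail.} In the multi-layer setting a token $x$ generally moves the lower layers. So ``repeating a suitable group word'' does not in general repeat the same $z$. The only maps you can iterate are the composite affine maps of input sequences that return the lower layers to the same configuration. A non-neutral single step therefore does not make coordinate $j$ useless.

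Here is a counterexample. Take $\gp{G}=C_2\times C_2$ and write $x=(\epsilon(x),\sigma(x))\in\set{\pm1}^2$. Let layer~1 track $\epsilon$ with state $h^{(1)}\in\set{\pm1}$. Give layer~2 a single coordinate with $b^{(2)}=0$ and $\lambda^{(2)}(x,h^{(1)})=\sigma(x)\,\mu(h^{(1)})/\mu(\epsilon(x)h^{(1)})$, where $\mu(1)=1$ and $\mu(-1)=2$.
\begin{itemize}
\item This is a legitimate layer, since the previous layer-1 state equals $\epsilon(x_t)h^{(1)}_t$.
\item It gives $h^{(2)}_t=\mu(h^{(1)}_t)\,s_t$, where $s_t\in\set{\pm1}$ is the running $\sigma$-component. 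So the model tracks $\gp{G}$ exactly, with all states in $\set{\pm1,\pm2}$.
\item Yet the reachable input with $\epsilon(x)=-1$, $h^{(1)}=-1$ has $\abs{\lambda}=2$.
\item No word makes this coordinate contract or diverge. Zeroing it, as you propose, erases the $\sigma$-component, so the new SSM no longer tracks $\gp{G}$.
\end{itemize}

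The center step fails the same way. A layer of the form $h\mapsto\rho(x)\bigl(h-c(h^{(1)}_{t-1})\bigr)+c(h^{(1)}_t)$ tracks perfectly well, yet its per-step centers depend on the token, and some single steps are even pure drifts. So your per-token conclusion about centers cannot come from a divergence argument.

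\textbf{The missing idea: work with loops of the lower layers.} This is what the paper does. For each pair $(j,h^{(1:r-1)})$:
\begin{itemize}
\item Suppose some input sequence $\bar x$ that keeps $h^{(1:r-1)}$ fixed induces contraction, expansion or drift on coordinate $j$ of layer $r$. Then build the SSM that behaves as if $\bar x^{\,n}$ were inserted whenever the lower layers are at $h^{(1:r-1)}$, with $n$ large and $\bar x^{\,n}$ evaluating to $e$. This gives $\tilde\lambda^{(r)}(h^{(1:r-1)},g)_j\approx 0$.
\item Otherwise every such loop acts on coordinate $j$ as a neutral rotation or the identity. Loops at a fixed lower state are closed under powers and concatenation. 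So \cref{lemma:neutral_composition} and the argument of \cref{lemma:distinct_centers}, applied to loops, force a common center, which depends only on $h^{(1:r-1)}$.
\end{itemize}
Finite precision makes the number of pairs finite.

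Your last paragraph already contains the right tool: taking powers of a word to close the cycle in the lower layers. You need to use it in the contraction step as well. You also need to state the conclusion for loop maps rather than for individual tokens. That loop-level form is exactly what the proof of \cref{thm:multi_layer} uses, where layer~2 is analyzed under the inputs $\mathcal{X}$ that fix the first layer.
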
\vspace{-0.5em}

A proof is provided in \cref{apx:proof_circle_per_state}. We then use it to prove the main theorem in \cref{apx:proof_multi_layer}.

\section{A Diagonal SSM for $S_3$}
In this section, we present an example of a two-layer diagonal SSM that can track the non-commutative yet solvable group $S_3$ for arbitrary sequence lengths. This section has two purposes. First, it provides a concrete illustration of the theory introduced in the previous section. We begin by showing how two relatively simple automata can be combined to form a non-Abelian solvable automaton, and then demonstrate how this construction can be encoded in stacked diagonal SSM layers. Secondly, it highlights the gap between expressivity and learnability. While the theory suggests that a two-layer diagonal SSM can track the non-Abelian group $S_3$, our experiments in the next section show that, in practice, the model often struggles to learn solutions that generalize to longer sequences. This implies that while generalizable solutions would lie within the expressive power of the diagonal SSMs we study,  they may be difficult  for the learning algorithm to find.

\subsection{The $S_3$ Group}
The symmetric group $S_{3}$ is the group of all permutations of three elements. 
It has six elements in total and provides the smallest non-Abelian example of a finite group. 
The elements can be written in cycle notation as
$
    S_{3} = \{\, e, (12), (13), (23), (123), (132) \,\},
$
where $e$ denotes the identity permutation, the transpositions $(12), (13), (23)$ swap two elements, 
and the $3$-cycles $(123)$ and $(132)$ permute all three elements cyclically in opposite directions. 
$S_3$ can be decomposed as the semi-direct product of two Abelian groups, $C_2$ and $C_3$. 
It can also be presented by two generators, for example
$
S_{3} = \langle \, (12), (123) \, \rangle,
$
of orders $2$ and $3$, \ie $(12)^2 = e$, $(123)^3 = e$. The Cayley table of group multiplications for $S_3$ is given in~\cref{app:cayley}. 
\subsection{Decomposing the $S_3$ Automaton}
 Now, we show that there exists a finite state automaton, built as a cascade product of two simpler automata, that can track $S_3$. 
\cref{fig:automata-compound} illustrates one such composition of automata capable of tracking $S_3$.    
To illustrate how this solution works, we encode each $g \in S_3$ as $s^{\alpha} r^{\beta}$, with $s=(12)$, $r=(123)$ denoting swap and rotation generators, and multiplication applied left to right. Under this encoding, $S_3$ can be rewritten as 
$
    S_{3} = \{ e,\, s,\, sr^2,\, sr,\, r,\, r^2 \}.
$
The combined automaton operates as follows: $s^{\alpha} r^{\beta}$ first applies $\alpha$ swaps on the two-state automaton with state $ Q^{(1)} \in \{-1,1\}$, yielding  
$
    Q^{(1)} = Q^{(1)}_0 (-1)^\alpha,
$
with $Q^{(1)}_0$ being  the initial state. The second automaton then takes as input both the input group element and the state of the first automaton and transitions its  state $Q^{(2)}\in \{Q_1,Q_2,Q_3\}$ according to the transition rule illustrated in~\cref{fig:automata-compound}; that is, if the state of the first automaton is $Q^{(1)}=1$, upon seeing a group element with $\beta=1$, it rotates its states according to the cyclic permutation $(123)$, and if $\beta=2$, the cyclic permutation $(132)$ is applied; however, if $Q^{(1)}=-1$, the cyclic permutations are applied in the other direction, \ie for $\beta=1$, the state transitions according to $(132)$ and for $\beta=2$, it transitions according to $(123)$. In~\cref{app:s_3-cyclic-automata}, we give concrete examples showing how this encoding, together with the transition rule of the two automata, correctly reproduces group multiplication and thus tracks 
$S_3$
  with the compounded automaton. In what follows, we use the result of \cref{ex:1} in~\cref{app:s_3-cyclic-automata} based on this specific encoding, summarized in~\cref{tab:s3-state-mapping}, to parameterize a two-layer AUSSM that tracks $S_3$ consistently.

\begin{figure}[t]
    \centering
    \resizebox{0.8\textwidth}{!}{
    \begin{tikzpicture}[->, >=stealth, shorten >=1pt, auto, semithick]

      \begin{scope}[xshift=-5cm] 
        \node[state, initial] (q1) {$1$};
        \node[state, right of=q1, node distance=3cm] (q2) {$-1$};
        \path
          (q1) edge[loop above] node {$0$} (q1)
               edge[bend left=20] node {$1$} (q2)
          (q2) edge[loop above] node {$0$} (q2)
               edge[bend left=20] node {$1$} (q1);
      \end{scope}

      \begin{scope}[xshift=4cm] 
        \node[state, initial] (p1) at (90:2.5cm) {$Q_1$};
        \node[state] (p2) at (210:2.5cm) {$Q_2$};
        \node[state] (p3) at (330:2.5cm) {$Q_3$};

        \path
          (p1) edge[loop above] node {$(-1/1,0)$} (p1)
          (p2) edge[loop left] node {$(-1/1,0)$} (p2)
          (p3) edge[loop right] node {$(-1/1,0)$} (p3);

        \path
          (p1) edge[bend left=20, black] node[pos=0.5, sloped, above, font=\scriptsize]
            {$(1,1)$ , $(-1,2)$} (p2)
          (p2) edge[bend left=20, black] node[pos=0.5, sloped, below, font=\scriptsize]
            {$(1,1)$ , $(-1,2)$} (p3)
          (p3) edge[bend left=20, black] node[pos=0.5, sloped, above, font=\scriptsize]
            {$(1,1)$ , $(-1,2)$} (p1);

        \path
          (p1) edge[bend left=20, red] node[pos=0.5, sloped, below, font=\scriptsize]
            {$(1,2)$ , $(-1,1)$} (p3)
          (p3) edge[bend left=20, red] node[pos=0.5, sloped, above, font=\scriptsize]
            {$(1,2)$ , $(-1,1)$} (p2)
          (p2) edge[bend left=20, red] node[pos=0.5, sloped, below, font=\scriptsize]
            {$(1,2)$ , $(-1,1)$} (p1);
      \end{scope}
\draw[dashed, thick, ->] (q2.south east) .. controls +(1,-1) and +(-3,1) .. (p1.west) 
  node[midway, below, sloped, font=\scriptsize] {first input  to $\mathbb{Z}_3$ comes from the $\mathbb{Z}_2$'s state};

    \end{tikzpicture}
    }
    \caption{Compound automaton combining (left) two-state parity and (right) three-state cyclic group. The dashed curved line represents the connection between the two cyclic automata, equivalent to the semi-direct product of their groups. It shows that the automaton on the right considers the state of the first automaton, besides the original input. Commas separate inputs that produce the same state transition. States labelled \emph{start} are the initial states of the automata.}
    \label{fig:automata-compound}
\end{figure}
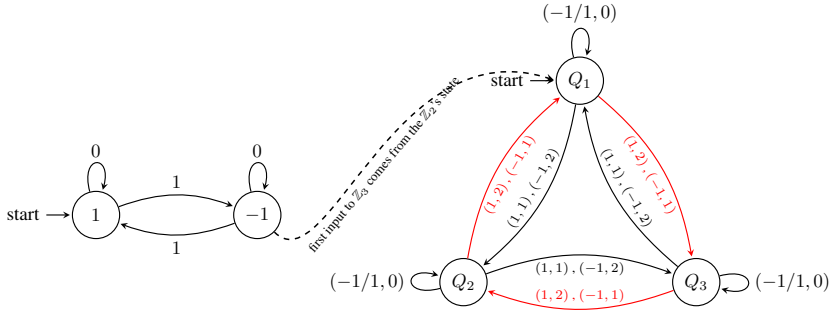

\subsection{Simulating the 2-Automaton for $S_3$ with Diagonal SSMs}\label{section:analytical-solution}
From~\cref{thm:single_layer}, each diagonal SSM layer can model one Abelian automaton; it remains to show that we can stack them in a way that simulates the cascade product of the two Abelian automata.

With the encoding $(\alpha,\beta)$ for the input $s^{\alpha}r^{\beta}$, for the first AUSSM, with the state equation $h_t = e^{-i\Delta(x) \Lambda(x)} h_{t-1} + B(x_t)$, we set $B(x) = 0$, $\Delta(x)=1$, and $\Lambda((0,\beta)) = 0$, $\Lambda((1,\beta)) = \pi$. 
\\
The second layer gets the input  $(h^{(1)},\alpha,\beta)$, with $h^{(1)}$ being the state of the first AUSSM layer, fed to the second layer with a skip connection. Here we set $B^{(2)}(x)=0$,  $\Delta^{(2)}(x)=1$, and $\Lambda^{(2)}((1,\alpha,\beta)) = \frac{2\pi}3\beta$, $\Lambda^{(2)}((-1,\alpha,\beta)) = -\frac{2\pi}3\beta$. 
The full state of the SSM will be $(h^{(1)}_0 e^{-i\alpha\pi}, h^{(2)}_0 e^{-\frac{2\pi i}3\beta h^{(1)}_0 e^{-i\alpha\pi}})$. This results in a finite number of states, and with a correct decoding, we can map the states to the elements of the group.

\section{Experiments}

We evaluate single- and two-layer diagonal SSMs on a set of solvable group state-tracking tasks. Our goal is to measure what these models learn with standard training, not just what they can represent in theory. Our models include Mamba~\citep{gu2024mamba}, negative Mamba~\citep{grazzi2024unlocking}, AUSSM~\citep{karuvally2025bridging}, RNN, and simplified AUSSM where $\Delta$ and $B$ have been removed since they are theoretically not needed for expressing groups. 

\paragraph{Datasets.} Our set of tasks consists of different kinds of solvable groups that we are interested in: parity, which is $C_2$, a small, a medium and a large cyclic Abelian group, $C_6$, $C_{24}$ and $C_{60}$ respectively, two products of cyclic groups, $C_2 \times C_4$ and $C_3 \times C_6$, and two small solvable non-Abelian groups with subnormal chains of length 2, $S_3$ and $A_4$. We also include the non-solvable group $A_5$, which we expect no model can learn. 

For a cyclic group $C_N$ with $N\in\mathbb{N}$,  the task is essentially addition mod $N$, with the input tokens being chosen uniformly at random from the group. An example input for the task $C_{60}$ is the sequence $[51, 20, 4, 49]$ and the correct output is $[51, 11, 15, 4]$. 

The group $S_3$ was introduced in detail in~\cref{example:s3}. 
The alternating group $A_4$ consists of the 12 even permutations of four elements. Like $S_3$, it admits a subnormal chain of length two with Abelian factors, and is therefore solvable.

The alternating group $A_5$ consists of the 60 even permutations of five items. In contrast to $S_3$ and $A_4$, its shortest subnormal series is  
$
    \{e\} \triangleleft A_5
$, 
with the non-Abelian factor $A_5/\{e\} \cong A_5$. Consequently, $A_5$ is not solvable.  

\paragraph{Experimental Details.} 
Models were trained on sequences of up to length 60 and tested on sequences up to length 1000 to assess their length generalization performance. We applied curriculum learning by gradually increasing the length of training sequences, starting from length 2. We report the longest sequence length $\geq 100$ at which the trained model can obtain above 90\% accuracy. If the model was unable to extrapolate we report \bad. The results can be seen in \cref{tab:CID-state-tracking-combined}.

All experiments were conducted with standard FP32 precision. We performed a grid search over three state dimensions $\{32, 64, 128\}$, three learning rates $\{1e^{-4}, 5e^{-4}, 1e^{-3}\}$, and three learning-rate schedulers $\{\text{fixed}, \text{reduce on plateau}, \text{cosine}\}$. The AdamW optimizer was used with these learning rates and a weight decay of $0.01$. Each experiment was run with three random seeds, and we reported the best result across seeds. The embedding and model dimensions were fixed at $m = n = 698$ across all experiments. The choice of a relatively large embedding dimension was motivated by stabilizing optimization across both real- and complex-valued kernels. Unless otherwise noted, models were trained with a batch size of $256$. We used $\operatorname{gelu}$ nonlinearities between SSM layers and residual connections within each SSM layer. The initial state $h_0$ of the SSMs was set to $\mathbf{1}$ and normalized to unit norm. 

\begin{table}[h]
    \centering
    \begin{subtable}[t]{0.48\textwidth}
        \centering
        \resizebox{\textwidth}{!}{%
        \begin{tabular}{c c c c c c}
        \toprule
        Task & Mamba & Negative Mamba & Simple AUSSM & AUSSM & RNN \\
        \midrule
        $C_2$ & \bad & 1000 & 160 & 1000 & 1000  \\
        $C_6$ & \bad & \bad & 240 & 940 & 1000 \\
        $C_{24}$ & \bad & \bad  & 240   & 260  & 1000 \\
        $C_{60}$ & \bad & \bad & 300 & 240 & \bad \\
        $C_2 \times C_4$ & \bad & \bad & 140 & 200 & 1000 \\
        $C_3 \times C_6$ & \bad & \bad & 500 & 200 & \bad \\
        $S_3$ & \bad & \bad & \bad & \bad & 1000 \\
        $A_4$ & \bad & \bad & \bad  & \bad & 1000 \\
        $A_5$ & \bad & \bad & \bad  & \bad & \bad \\
        \bottomrule
        \end{tabular}}
        \caption{Single-layer models.}
        \label{tab:CID-state-tracking-singlelayer}
    \end{subtable}\hfill
    \begin{subtable}[t]{0.48\textwidth}
        \centering
        \resizebox{\textwidth}{!}{%
        \begin{tabular}{c c c c c c}
        \toprule
        Task & Mamba & Negative Mamba & Simple AUSSM & AUSSM & RNN \\
        \midrule
        $C_2$ & \bad & 1000 & 1000 & 200 & 1000  \\
        $C_6$ & \bad & \bad & 240 & 100 & 1000 \\
        $C_{24}$ & \bad & \bad & 300 & 160 & 1000 \\
        $C_{60}$ & \bad & \bad & 260 & \bad & \bad \\
        $C_2\times C_4$ & \bad & 360 & 160 & \bad & 1000 \\
        $C_3\times C_6$ & \bad & \bad & 260 & 200 & \bad \\
        $S_3$ & \bad & \bad & \bad & \bad & 1000 \\
        $A_4$ & \bad & \bad & \bad & \bad & 1000 \\
        $A_5$ & \bad & \bad & \bad  & \bad & \bad \\
        \bottomrule
        \end{tabular}}
        \caption{Two-layer models.}
        \label{tab:CID-state-tracking-multilayer}
    \end{subtable}
    \caption{Performance of various models on state-tracking tasks. Each table reports the longest sequence length $\geq 100$ where a model is able to achieve accuracy greater than $90\%$. The maximum training length was 60 for all models over all state-tracking tasks. \bad\ indicates that the model failed to extrapolate to long sequences.}
    \label{tab:CID-state-tracking-combined}
\end{table}

\paragraph{Results.}
For RNNs, there is no theoretical expressivity barrier that would stop them from successfully learning all tasks. However, in our experiments, the single-layer RNN struggles on $C_{60}$, likely due to the large size of the group, and the two-layer variant struggles on both $C_{60}$ and $A_4$. It is possible that with different architectural hyperparameters or longer training the RNN would have succeeded.

For Mamba, since it does not have negative eigenvalues, in accordance with previous theoretical results~\citep{sarrof2024expressive}, we expect it to not be able to track any non-trivial group. This is indeed observed in our experiments. Negative Mamba is a modified version of Mamba which allows for negative eigenvalues and with a single layer, is expected to only be able to solve parity~\citep{grazzi2024unlocking}. This is also observed in our experiments. On the other hand, according to our theory, stacking two layers, each of which is capable of solving $C_2$ (\eg Negative Mamba), can do state-tracking for $C_4$\footnote{This has also been stated in Theorem 2 of~\citep{grazzi2024unlocking}.}. This is because the group $C_4$ can be written as the subnormal chain $C_4 \trianglerighteq C_2 \trianglerighteq \set{e}$, where the factors are all $C_2$, \ie Abelian. We see in practice that 2-layer Negative Mamba is able to do state-tracking for $C_2 \times C_4$. Interestingly, this is a case where the increased expressivity from stacking layers is usable through gradient-based learning.

For AUSSM and simple AUSSM, their single-layer variants are theoretically able to express Abelian groups, which in our case are the six groups $C_2$, $C_6$, $C_{24}$, $C_{60}$, $C_2 \times C_4$, and $C_3 \times C_6$. Gradient-based optimization has successfully been able to find solutions that extrapolate in this case as well (with the exception of two anomalies for two-layer AUSSM). However, their two-layer variants have not been able to match the expressive capacity that is expected from them as they fail on $S_3$ and $A_4$.

Overall, these experiments point to a learnability gap. The main bottleneck of non-Abelian tracking is not expressivity but optimization and the model's built-in bias: the solutions exist in weight space, yet standard training does not reliably reach them.

To gain insight into the potential causes of this learnability issue, we initialized AUSSM near the analytical solution for $S_3$ from~\cref{section:analytical-solution}. We observed that sufficiently close initialization improves training: the model successfully learns and extrapolates to sequences at least four times longer than those seen during training. This suggests that solutions lie within a basin of attraction in the loss landscape. However, the broader structure of the loss landscape is still not well understood. A study analogous to~\citet{hahn2024sensitive}, which investigates transformers, would be a valuable future direction for SSMs. Notably, since initialization near the solution aids optimization in SSMs, the challenges they face may differ substantially from those of transformers, where solutions have been shown by~\citet{hahn2024sensitive} to correspond to isolated points in weight space.

\section{Discussion}\label{sec:discussion}

Our results reveal a sharp distinction between what diagonal SSMs can represent in principle and what they can learn in practice. Theoretically, a single diagonal layer suffices for all Abelian groups, and stacking layers expands expressivity exactly to solvable groups with a subnormal series of matching length. This characterization places diagonal SSMs within a precise group-theoretic boundary: they are strictly weaker than non-diagonal recurrent models, yet depth provides a disciplined pathway to handle increasingly complex dynamics.

Empirically, however, our experiments show that diagonal SSMs often fail to realize their expressive potential. Even two-layer models, which can provably represent $S_3$, rarely discover solutions that generalize beyond training lengths. This gap points to difficulties of gradient-based optimization when searching for encodings of non-Abelian structure within the restricted hypothesis class. It also suggests that diagonality, while efficient, imposes inductive biases that may actively hinder training on harder tasks.

From a broader perspective, our findings connect to other architectural choices in sequence models. Allowing block-diagonal structure, even in $2 \times 2$ form, would in principle lift the expressivity of SSMs into $\NC^1$,  enabling simulation of non-solvable groups. Similarly, introducing complex-valued attention weights in Transformers may bring them closer to the expressivity frontier we identify here for SSMs. Finally, it is worth noting that many practical applications do not demand arbitrary-length state-tracking; being able to stably handle moderately long sequences may be sufficient, though our results clarify what is lost at the limit.

Finally, although we state the main results for groups, the framework extends naturally to semigroups and monoids. Allowing inputs that \emph{reset} a layer (\ie map a coordinate to a fixed center under finite precision) lets a diagonal SSM simulate reset automata in addition to group components. Consequently, the same depth-based view applies to cascade products comprising Abelian group factors and resets, bringing the analysis in line with the Krohn--Rhodes perspective on solvable automata.

\section{Conclusion}\label{sec:conclusion}

We have given a complete characterization of the expressive power of diagonal SSMs on group state-tracking tasks. A single diagonal layer cannot track non-Abelian groups, while a $k$\babelhyphen{nobreak}layer SSM can track precisely those groups admitting a subnormal series of length $k$ with Abelian factors. This establishes a provable expressivity gap between single- and multi-layer diagonal SSMs.

Our experiments further demonstrate a learnability gap: despite their theoretical capacity, multi-layer diagonal SSMs struggle to learn even small non-Abelian solvable groups such as $S_3$ and $A_4$. Together, these results highlight the importance of separating expressivity from learnability when evaluating new sequence architectures. Future progress will require not only expanding the expressive frontier --- through, for example, block-diagonal transitions or hybrid models --- but also developing training methods that can reliably reach solutions guaranteed to exist in principle.

\newpage
\section{Acknowledgements}
We thank Ashish Sabharwal and Razvan Pascanu for useful discussion and  Yusong Wu, Alex Hernandez-Garcia, Gauthier Gidel, and Ioannis Mitliagkas for feedback on an earlier draft of this paper. This research is in part supported by CIFAR AI Chairs and the NSERC Discovery program. Mila and the Digital Research Alliance of Canada provided computational resources.

\bibliography{iclr2026_conference}
\bibliographystyle{iclr2026_conference}

\appendix
\newpage
\section{Background: One-Dimensional Complex Affine Dynamics}\label{sec:1d-complex-dynamics}

We consider the affine recurrence
\begin{equation}\label{eq:affine}
    x_{t+1} \;=\; \lambda\, x_t + b, 
    \qquad \lambda,b \in \mathbb{C},\quad t=0,1,2,\dots
\end{equation}
with initial condition $x_0\in\mathbb{C}$. This section records a complete, self-contained classification of the dynamics and then isolates the bounded regimes relevant for our work.

\paragraph{Closed form and fixed points.}
Iterating \eqref{eq:affine} yields
\begin{equation}\label{eq:closed-form}
    x_t \;=\; \lambda^{t} x_0 \;+\; b \sum_{k=0}^{t-1} \lambda^{k}
    \;=\;
    \begin{cases}
        \lambda^{t}\bigl(x_0 - c\bigr) + c, & \lambda \neq 1, \\[4pt]
        x_0 + t\,b, & \lambda = 1,
    \end{cases}
    \qquad c \defeq \dfrac{b}{1-\lambda}\ \ (\lambda\neq 1).
\end{equation}

A fixed point exists iff either $\lambda\neq 1$ (unique fixed point $c$) or $(\lambda=1 \text{ and } b=0)$ (every point is fixed). When $\lambda=1$ and $b \neq 0$ there is no fixed point.

\paragraph{Shift-of-origin reduction.}
When $\lambda\neq 1$, define the shift $y_t \defeq x_t - c$ with $c=\dfrac{b}{1-\lambda}$. Then
\begin{equation}\label{eq:shifted}
    y_{t+1} = \lambda y_t,
\end{equation}
so the inhomogeneous recurrence \eqref{eq:affine} is conjugate to the homogeneous linear map $y\mapsto \lambda y$. All asymptotics therefore reduce to the magnitude and argument of $\lambda$:

\paragraph{Complete case split.}
Let $\lambda = r e^{i\theta}$ with $r=|\lambda|$.
\begin{enumerate}
    \item \emph{Strict contraction} ($|\lambda|<1$): $x_t \to c$ exponentially at rate $r^t$ (independent of $x_0$). If $\lambda\in\mathbb{R}_{>0}$ there is no rotation; otherwise each step rotates by $\theta$ while contracting.
    \item \emph{Neutral rotation} ($|\lambda|=1$, $\lambda\neq 1$): $|x_t - c| = |x_0 - c|$; the orbit is periodic iff $\theta/2\pi\in\mathbb{Q}$ and is dense on the circle centered at $c$ otherwise.
    \item \emph{Neutral translation} ($\lambda=1$):
    \begin{itemize}
    \item $b=0$: $x_t = x_0$ (every point is fixed).
    \item $b\neq 0$: $x_t=x_0+t\,b$ (unbounded linear drift).
    \end{itemize}
    \item \emph{Expansive} ($|\lambda|>1$): generically $|x_t|\to\infty$ exponentially; the unique nongeneric exception is $x_0=c$ (then $x_t = c$).
\end{enumerate}

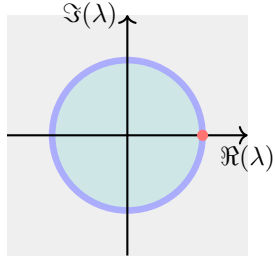
\begin{figure}[!h]
    \centering
    \begin{tikzpicture}[thick, scale=1]
        \fill[color=black!20, opacity=0.3] (-1.6, -1.6) rectangle (1.6, 1.6);
        \fill[color=teal!20,opacity=0.9] (0,0) circle (1cm);
        \draw[color=blue!35,opacity=0.9,line width=2.5pt] (0,0) circle (1cm);
        \draw[->] (-1.6, 0) -- (1.6, 0) node[below] {$\Re(\lambda)$};
        \draw[->] (0, -1.6) -- (0, 1.6) node[left] {$\Im(\lambda)$};
        \fill[red!55] (1, 0) circle (2.1pt) node[above right] {};
    \end{tikzpicture}
    \caption{We distinguish 4 cases for $\lambda$: (teal) $\| \lambda \| < 1$, (purple) $\| \lambda \| = 1, \lambda \neq 1$, (pink) $\lambda = 1$, (gray) $\| \lambda \| > 1$.}
    \label{fig:lambda_cases}
\end{figure}

\paragraph{Boundedness.}
The trajectory $(x_t)$ is bounded if either $|\lambda|<1$ or $|\lambda|=1$ with $(\lambda\neq 1)$ or $(\lambda=1 \text{ and } b=0)$. 
It is unbounded if $|\lambda|>1$ (unless $x_0=c$) or $(\lambda=1 \text{ and } b\neq 0)$.

\paragraph{Higher Dimensions.}
If $\Lambda \in \C^{d \times d}$ is diagonal and $b\in\mathbb{C}^d$, the dynamics decouple coordinate-wise and the above 1D classification applies to each coordinate independently.

\section{The Example of the Group $S_3$}

\subsection{Cayley Table for $S_3$}\label{app:cayley}

\begin{table}[H]
\centering
\renewcommand{\arraystretch}{1.2}
\begin{tabular}{c|cccccc}
$\gop$ & $e$ & $(12)$ & $(13)$ & $(23)$ & $(123)$ & $(132)$ \\
\hline
$e$     & $e$ & $(12)$ & $(13)$ & $(23)$ & $(123)$ & $(132)$ \\
$(12)$  & $(12)$ & $e$ & $(132)$ & $(123)$ & $(13)$ & $(23)$ \\
$(13)$  & $(13)$ & $(123)$ & $e$ & $(132)$ & $(23)$ & $(12)$ \\
$(23)$  & $(23)$ & $(132)$ & $(123)$ & $e$ & $(12)$ & $(13)$ \\
$(123)$ & $(123)$ & $(13)$ & $(23)$ & $(12)$ & $(132)$ & $e$ \\
$(132)$ & $(132)$ & $(23)$ & $(12)$ & $(13)$ & $e$ & $(123)$ \\
\end{tabular}
\caption{Cayley table of the symmetric group $S_{3}$.}
\label{tab:s3-cayley}
\end{table}
\subsection{Two-Automaton Correctly Simulates $S_3$}\label{app:s_3-cyclic-automata}

\begin{example}\label{ex:1}
Consider the $s^{\alpha}r^{\beta}$ encoding for $S_3$ elements, with $s=(12)$, $r=(123)$, $\alpha\in\{0,1\}$ and $\beta\in\{0,1,2\}$, and the transition rules described above. We represent the three states $Q_1, Q_2, Q_3$ of the second automaton by the cube roots of unity $e^{-i0}=1$, $e^{-\tfrac{2i\pi}{3}}$, and $e^{-\tfrac{4i\pi}{3}}$. This representation does not change the fact that the automaton has discrete states; rather, it provides a convenient way to describe transitions as rotations, and later to connect the automaton to SSMs with continuous states. In this view, a “rotation” between states corresponds to multiplying by a discrete power of $e^{-\tfrac{2i\pi}{3}}$.
  Starting from the initial state $(1, e^{-i0})$, we apply each group element to obtain the automaton states and derive a decoding rule that maps any automaton state back to its associated group element.
\end{example}

For the identity $e$, the automaton remains $(1,1)$. The swap $s = s^1r^0$ flips $Q^{(1)}$ to $-1$ while leaving the second automaton unchanged, giving $(-1,1)$, decoded as $s$. For $sr^2$, $Q^{(1)}$ flips to $-1$, and the second automaton rotates according to $Q^{(1)}=-1$ and $\beta=2$, resulting in $(-1, e^{2i\frac{2\pi}{3}})$, decoded as $sr^2$. Similarly for $sr$, $Q^{(1)}$ flips to $-1$ and the second automaton rotates to $(-1, e^{i\frac{2\pi}{3}})$, decoded as $sr$. Finally, for $r$ and $r^2$, with no swap, the second layer rotates positively by $\frac{2\pi}{3}$ and $\frac{4\pi}{3}$, giving $(1, e^{-i\frac{2\pi}{3}})$ and $(1, e^{-i\frac{4\pi}{3}})$, decoded as $r$ and $r^2$. 

\cref{tab:s3-state-mapping} summarizes this decoding rule by providing a map between the states of the two-automaton and the elements of $S_3$.
\begin{table}[h]
\centering
\begin{tabular}{c c}
\toprule
Automaton State & Group Element in $S_3$ \\
\midrule
$(1, 1)$ & $e$ \\
$(-1, 1)$ & $s$ \\
$(-1, e^{i\frac{4\pi}{3}})$ & $sr^2$ \\
$(-1, e^{i\frac{2\pi}{3}})$ & $sr$ \\
$(1, e^{-i\frac{2\pi}{3}})$ & $r$ \\
$(1, e^{-i\frac{4\pi}{3}})$ & $r^2$ \\
\bottomrule
\end{tabular}
\caption{Mapping between automaton states and group elements of $S_3$.}
\label{tab:s3-state-mapping}
\end{table}

\begin{example}\label{ex:2}
Using the encoding, decoding, and transition rules above, one can in principle reproduce the full Cayley table of $S_3$ with the two-automaton. Here, we verify several nontrivial products to illustrate this behavior.

Trivial products such as $s \gop s$ or $r \gop r$ follow immediately. For less obvious cases, consider $sr^2 \gop sr^2$. The first $sr^2$ maps the automaton to $(-1,e^{i\frac{4\pi}{3}})$; the second flips $Q^{(1)}$ back to $1$ and rotates the second automaton by $4\pi/3$ in the positive direction, returning the system to $(1,1)$, which is the correct result for two consecutive applications of the same swap. Similarly, $sr \gop sr$ returns the automaton to $(1,1)$.  
Next, $s \gop sr$ maps to $(-1,0)$ after $s$, then $sr$ flips $Q^{(1)}$ back to $1$ and rotates the second automaton by $2\pi/3$, yielding $(1,e^{-i2\pi/3}) \equiv r$, with $\equiv$ denoting the equivalence. Reversing the order, $sr \gop s$ gives $(-1,e^{i2\pi/3})$ after $sr$, then $s$ flips $Q^{(1)}$ back to $1$, giving $(1,e^{i2\pi/3}) \equiv r^2$.  \\
Finally, for $sr \gop sr^2$, the first $sr$ gives $(-1,e^{i2\pi/3})$, and $sr^2$ flips $Q^{(1)}$ to $1$ and rotates the second automaton by $4\pi/3$, yielding $(1,e^{-i2\pi/3}) \equiv r$. Conversely, $sr^2 \gop sr$ maps first to $(-1,e^{i4\pi/3})$ and then to $(1,e^{i2\pi/3}) \equiv r^2$.  
These checks confirm that the two-automaton correctly reproduces the nontrivial entries of the Cayley table.
\end{example}

\section{Proofs}

\subsection{Proof of \cref{lemma:contraction}}\label{apx:proof_contraction}

\begin{proof} We split the proof into two cases:

(Case $\abs{\lambda(x)_j} < 1$) In $M$, for all $g \in \gp{G}$, we have $\lambda(\langle g \rangle\langle x \rangle^n)_j = \lambda(g)_j\lambda(x)^n_j \approx 0$ for sufficiently large $n$, where $\approx$ means equality at finite precision. We pick a sufficiently large $n$ such that $x^n = e$ as well. Note that this is always possible for all group elements $x$. 
For this $n$, the input sequence $\langle g \rangle\langle x \rangle^n$ transitions $h_j$ into $x$'s center of rotation (\ie fixed point) in coordinate $j$. Let $c$ be this center of rotation. We thus construct a new SSM $\widetilde{M}$ with $\tilde{\lambda}(g) = \tilde{\lambda}(gx^n) \defeq \lambda(\langle g \rangle\langle x \rangle^n) \approx_j 0$ and $\tilde{b}(g) = \tilde{b}(gx^n) \defeq b(\langle g \rangle\langle x \rangle^n) \approx_j c$ for all $g \in \gp{G}$, where $\approx_j$ means that we are referring to the $j$th coordinate.
In this new SSM, all inputs transition state-coordinate $j$ into the fixed point $c$ of input $x$. Note that $\widetilde{M}$ has been constructed in such a way that it respects the group law of $\gp{G}$. Intuitively, this new SSM acts the way the old SSM would if we were to input $x$ for $n$ times after every $g$ seen in the input sequence and ignore the first $n$ outputs, and since $x^n = e$, the output of the new SSM should be the same.

(Case $\abs{\lambda(x)_j} > 1$ or $\lambda(x)_j = 1 \land b(x)_j \neq 0$) With a similar argument to the previous case one can construct a new SSM where all inputs transition state-coordinate $j$ into \texttt{inf}, that is, the constant $c$ will be equal to \texttt{inf}. We are assuming here that in our finite-precision model, whenever the magnitude of a variable grows beyond some threshold, the variable gets fixed to a value, denoted by \texttt{inf}, that represents infinity.
\end{proof}

\subsection{Proof of \cref{lemma:neutral_composition}}\label{apx:proof_neutral_composition}

\begin{proof} With some simple algebra we get
\begin{align}
\lambda^*\big(\lambda(h-c_1)+c_1-c_2\big)+c_2 &= (\lambda\lambda^*)h - (\lambda\lambda^*)c_1 + \lambda^*(c_1-c_2) + c_2\\
&= h + (\lambda^*-1)c_1 + (1-\lambda^*)c_2 & (\lambda \lambda^* = 1)\\
&= h + (1-\lambda^*)(c_2-c_1).
\end{align}
Thus, the composition simplifies to $h \mapsto h + (1 - \lambda^*)(c_2 - c_1)$, which is a non-zero translation since $\lambda^* \neq 1$ and $c_1 \neq c_2$.
\end{proof}

\subsection{Proof of \cref{lemma:distinct_centers}}\label{apx:proof_distinct_centers}

\begin{proof}
We can find $\alpha_1, \alpha_2 \in \N$ such that $\lambda(g_1)_j^{\alpha_1} \lambda(g_2)_j^{\alpha_2} \approx 1$ at finite precision, $\lambda(g_1)_j^{\alpha_1} \neq 1$, and $\lambda(g_2)_j^{\alpha_2} \neq 1$. If the rotations are rational, this can be done exactly; if not, it can be done to arbitrary precision. Note that $\langle g_1 \rangle^{\alpha_1}$ and $\langle g_2 \rangle^{\alpha_2}$ induce neutral rotations about distinct centers in coordinate $j$. Thus, according to \cref{lemma:neutral_composition}, the input sequence $\langle g_1 \rangle^{\alpha_1} \langle g_2 \rangle^{\alpha_2}$ induces a non-zero translation in coordinate $j$. Similar to \cref{lemma:contraction}, repeating this sequence causes the SSM to diverge to \texttt{inf}.
\end{proof}

\subsection{Proof of \cref{thm:single_layer}}\label{apx:proof_single_layer}

\begin{proof}[Proof of \cref{thm:single_layer}]
($\Rightarrow$) We assume there is a single-layer DCD SSM that tracks group $\gp{G}$ and we show that $\gp{G}$ is Abelian. Applying \cref{lemma:contraction} to all coordinates implies that there exists a DCD SSM layer $\widetilde{M}$ that tracks $\gp{G}$ at finite precision, where no input and no coordinate has contraction or expansion or translation dynamics. In other words, all inputs and all coordinates have neutral rotation dynamics. \cref{lemma:distinct_centers} then implies that all inputs must induce neutral rotations \emph{about the same center} in each coordinate. As a result, the effect of any input sequence is independent of the order of the inputs, and hence the group $\gp{G}$ must be Abelian.

($\Leftarrow$) We assume $\gp{G}$ is Abelian and we show there is a single-layer DCD SSM that tracks $\gp{G}$. We construct a single-layer DCD SSM that tracks $\gp{G}$. By the fundamental theorem of finite Abelian groups~\citep{rotman2012introduction}, $\gp{G}$ is isomorphic to a product of $n$ cyclic groups $C_{k_1} \times ... \times C_{k_n}$ for some $n$ and $k_1, ..., k_n$. Every group element $g$ can be represented as $(m_1, ..., m_n) \in [k_1] \times ... \times [k_n]$. The group's diagonal complex matrix representation $g \in \gp{G} \mapsto \Lambda(g) \in \C^{n \times n}$, where $\Lambda(g)_{j, j} = \exp(2\pi i \frac{{m_j}}{k_j})$, can be used as the SSM transition matrix. With enough precision bits, $k_j$ roots of unity can be distinguished for all $j$. Let $h_0 = \mathbf{1}$ and $b(x) = \mathbf{0}$. The decoder simply maps each state to the corresponding group element. This SSM tracks $\gp{G}$ at finite precision.
\end{proof}

\subsection{Proof of \cref{lemma:circle_per_state}}\label{apx:proof_circle_per_state}

\begin{proof}
We do a proof by induction. The base case is the single-layer case ($k=1$), which says that there exists another single-layer DCD SSM that also tracks group $\gp{G}$ at finite precision, where, for all state-coordinates $j \in [d]$, the transition dynamics is fixed or a neutral rotation about a fixed center. We have already proved this in \cref{thm:single_layer}.

Let's assume the claim is true for $k \leq r-1$ layers, and the goal will be to prove it for $k=r$. Let $\widetilde{M}^{(r-1)}$ be the SSM with the first $r-1$ layers simplified. We apply the same strategy as the single-layer case in~\cref{lemma:contraction}. Arbitrarily fix a coordinate $j \in [d]$ for the $r$th layer and the states of the first $r-1$ layers of $\widetilde{M}^{(r-1)}$ to $h^{(1:r-1)}$. If some input sequence $\bar{x}$ keeps $h^{(1:r-1)}$ fixed and induces a contraction or expansion or translation in the $j$th component of the $r$th layer, with similar arguments as the single-layer case, we can construct another SSM where $\tilde{\lambda}^{(r)}(h^{(1:r-1)}, g)_j \approx 0$ for all $g \in \gp{G}$. Otherwise, we skip this $j, h^{(1:r-1)}$ pair as it already satisfies the claim. Intuitively, this new SSM acts the way the old SSM would if every time the state of the first $r-1$ layers was $h^{(1:r-1)}$ we would input $\bar{x}$ for $n$ times, where $n$ is sufficiently large and such that the sequence $\bar{x}$ evaluates to identity, and ignore the first $n$ outputs. We repeat this procedure for all $j, h^{(1:r-1)}$ pairs, which are finite due to the finite precision assumption. Call the final SSM $\widetilde{M}^{(r)}$.
\end{proof}

\subsection{Proof of \cref{thm:multi_layer}}\label{apx:proof_multi_layer}

\begin{proof}[Proof of \cref{thm:multi_layer}] ($\Rightarrow$) We assume there is a $k$\babelhyphen{nobreak}layer DCD SSM $M$ that tracks $\gp{G}$ at finite precision and we show that $\gp{G}$ admits a subnormal series of length $k$ with Abelian factors. First step of the proof is to replace the SSM with another SSM $M'$ that tracks the group according to \cref{lemma:circle_per_state}. We now do a layer peeling argument starting from the first layer. 

Due to finite precision, the state space is finite.  There thus exists a finite input sequence that drives the system into a strongly connected component $\mathcal{C}$.\footnote{In fact, one can show that the state space is strongly connected to begin with but we don't need that.} Let $g$ be the product of this sequence. For any $h \in G$, appending $g^{-1}h$ yields running product $h$ within $\mathcal{C}$, so $\mathcal{C}$ contains at least one state decoding to each group element. Since the model is exact at all input lengths, it correctly tracks $\gp{G}$ within $\mathcal{C}$. We work within $\mathcal{C}$ from this point.

\noindent\textbf{Constructing $\gp{G}_{k-1}$.}
For a first-layer state $x_1$, define (overload) $\dec(x_1)$ as the \emph{set} of group elements decodable from some state in $\mathcal{C}$ with first-layer component $x_1$. We show these decoding sets are \emph{either disjoint or equal}.

Suppose $g \in \dec(x_1) \cap \dec(y_1)$. Then there exist states $s, s'$ in $\mathcal{C}$ with first-layer components $x_1, y_1$ respectively, both decoding to $g$. By strong connectivity, there exists a path from $s$ to $s'$. Since both decode to $g$, this path has product $e$. So $\dec(x_1) \subseteq \dec(y_1)$.  Similarly, there exists a path from $s'$ to $s$ with product $e$. So $\dec(y_1) \subseteq \dec(x_1)$.

The decoding sets thus partition $\gp{G}$. Let $\gp{G}_{k-1} = \dec([x_e])$ where $[x_e]$ is the equivalence class of first-layer states containing the identity. We show $\gp{G}_{k-1}$ is a subgroup. Take $g, h \in \gp{G}_{k-1}$. Starting from a state decoding to $h$ (first-layer in class $[x_e]$), append $h^{-1}g$. The running product is $g \in \gp{G}_{k-1}$, so we land in class $[x_e]$. Hence $h^{-1}g$ maps class $[x_e]$ to itself. In particular, starting from $e \in \gp{G}_{k-1}$ and appending $h^{-1}g$, the running product $h^{-1}g$ lands in class $[x_e]$, so $h^{-1}g \in \gp{G}_{k-1}$. It is easy to show with similar arguments that $\dec([x_*])$ for any class is a coset of $G_{k-1}$. See \cref{fig:cosets} for an example.

\begin{figure}[!h]
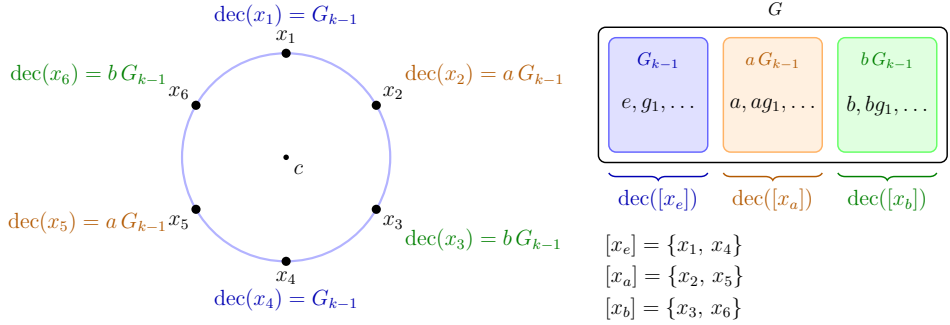

    \centering
    \includestandalone[width=0.9\textwidth]{tikz/cosets}
    \caption{A simple example where the first-layer state is in $\C^1$, the group $G$ is partitioned into 3 cosets, and each equivalence class has size 2.}
    \label{fig:cosets}
\end{figure}

\noindent\textbf{Normality.}
For $g_1 \in \gp{G}_{k-1}$ and $a \in G$, the input sequences $(a, g_1, a^{-1})$ and $(g_1, a, a^{-1})$ produce the same accumulated rotation on the first layer (since rotations commute). The second has product $g_1 \in \gp{G}_{k-1}$; the first has product $a g_1 a^{-1}$. Since they land in the same first-layer class, $a g_1 a^{-1} \in \gp{G}_{k-1}$. Thus $\gp{G} \trianglerighteq \gp{G}_{k-1}$.

\noindent\textbf{Abelian quotient.}
Similarly, $(a, b)$ and $(b, a)$ produce the same first-layer rotation for any $a, b \in G$, so $ab$ and $ba$ lie in the same class, \ie $ab \gp{G}_{k-1} = ba \gp{G}_{k-1}$.

\noindent\textbf{Peeling off.}
Fix a first-layer state $x_0 \in [x_e]$. Let $\cal{X}$ be the set of input sequences that fix $x_0$. We claim that the set of products of these sequences is exactly $\gp{G}_{k-1}$. The forward inclusion is immediate. For the reverse, take $g \in \gp{G}_{k-1}$. Starting from a state $s_0$ with first-layer $x_0$ decoding to $e$, apply $g$ to reach state $s_1$ with first-layer $y_1$ (same class) decoding to $g$. Since $g \in \dec([x_0])$, there exists a state $s_2$ in $\mathcal{C}$ with first-layer $x_0$ decoding to $g$. By strong connectivity, there is a path from $s_1$ to $s_2$ with product $e$ (both decode to $g$). Concatenating gives a sequence with product $g$ taking $x_0 \to y_1 \to x_0$.

We now restrict to inputs from $\cal{X}$ and thus fix the first layer to $x_0$. Note that $\cal{X}$ is closed under concatenation. With the first layer fixed, \cref{lemma:circle_per_state} gives layer~2 a fixed center and inputs (from $\cal{X}$) cause rotations on layer~2. We can now apply the same argument as we did for the first layer to obtain $\gp{G}_{k-1} \trianglerighteq \gp{G}_{k-2}$ with $\gp{G}_{k-1}/\gp{G}_{k-2}$ Abelian. Iterating over all $k$ layers produces the chain.

\textbf{Termination.}
After $k$ layers, $\gp{G}_0$ consists of elements whose input sequences fix all layers. Since the model is exact, distinct elements produce distinct states, so $\gp{G}_0 = \{e\}$.

($\Leftarrow$) We assume there is a subnormal series $(\gp{G} = \gp{G}_k) \trianglerighteq \gp{G}_{k-1} \trianglerighteq ... \trianglerighteq \gp{G}_1 \trianglerighteq (\gp{G}_0 = \set{e})$ where $\gp{G}_{i+1}/\gp{G}_{i}$ is Abelian for all $i \in [k]$ and we show that there exists a $k$\babelhyphen{nobreak}layer DCD SSM that tracks $\gp{G}$. We do a proof by induction on $k$. The base case ($k = 1$) is the single-layer setting which we have already proved in \cref{thm:single_layer}.

We now assume that we have a $k-1$ layer DCD SSM that tracks the group $\gp{N} \defeq \gp{G}_{k-1}$ which is a normal subgroup of $\gp{G}$. We show that we can add an initial layer to get a $k$\babelhyphen{nobreak}layer DCD SSM that tracks $\gp{G}$. The first layer is constructed following \cref{thm:single_layer} to track the Abelian group $\gp{H} \defeq \gp{G}_k / \gp{G}_{k-1}$.

We now describe how the first layer interacts with the top $k-1$ layers.
Fix a section $s:\gp{H}\to\gp{G}$ of the quotient map $\gp{G}\to\gp{H}$.
Every $g\in\gp{G}$ can be written as $g = n\,s(h)$ for some $n\in\gp{N}$ and
$h\in\gp{H}$, and we use this representation for the SSM state.
For an input token we write $g = s(h)\,n$ and assume it right-multiplies the
state.

Let the current state be $g' = n'\,s(h')$ and the input be $g = s(h)\,n$.
Then
\begin{equation}
g'g = n'\,s(h')\,s(h)\,n.
\end{equation}
Since $s(h')s(h)$ and $s(h'h)$ lie in the same coset of $\gp{N}$, there exists
$d(h',h)\in\gp{N}$ such that
\begin{equation}
s(h')\,s(h) = d(h',h)\,s(h'h).
\end{equation}
Substituting and inserting $s(h'h)^{-1}s(h'h)$ on the right gives
\begin{equation}
g'g
= n'\,d(h',h)\,\bigl(s(h'h)\,n\,s(h'h)^{-1}\bigr)\,s(h'h).
\end{equation}
Because $\gp{N}\trianglelefteq\gp{G}$, the conjugated term
$s(h'h)\,n\,s(h'h)^{-1}$ lies in $\gp{N}$.
Thus the updated state has the form $n'\kappa(h', g)s(h' h)$, where $\kappa(h', g) = d(h', h)(s(h'h) n s(h'h)^{-1}) \in \gp{N}$. This means that the first layer should update its own state by applying $h \in \gp{H}$ and output $\kappa(h', g) \in \gp{N}$ as input to the top $k - 1$ layers to achieve the overall desired update of group $\gp{G}$. See \cref{fig:construction}.

\begin{figure}[!h]
    \centering
    \includestandalone[width=0.85\textwidth]{tikz/construction}
    \caption{\textbf{(a)} Model is in state $g'$ and receives input $g$. \textbf{(b)} First layer updates from $h'$ to $h'h$ and outputs $\kappa(h', g)$. \textbf{(c)} Higher layers update from $n'$ to $n'\kappa(h', g)$. The final collective state is $n'\kappa(h',g)s(h'h) = g'g$ as desired.}
    \label{fig:construction}
\end{figure}

\end{proof}

\end{document}

%% file: preamble.tex
\usepackage{amsmath}
\usepackage{amssymb}
\usepackage{amsthm}
\usepackage{thm-restate}
\usepackage{mathtools}
\usepackage{bbm}

\usepackage{cleveref}
\usepackage{multirow}
\usepackage{xspace}
\usepackage{booktabs}
\usepackage{subcaption}
\usepackage{xcolor}
\usepackage{paralist}
\usepackage{graphicx}

\usepackage{etoolbox}

\makeatletter

\patchcmd{\NAT@test}{\else \NAT@nm}{\else \NAT@nmfmt{\NAT@nm}}{}{}

\DeclareRobustCommand\citepos
  {\begingroup
   \let\NAT@nmfmt\NAT@posfmt
   \NAT@swafalse\let\NAT@ctype\z@\NAT@partrue
   \@ifstar{\NAT@fulltrue\NAT@citetp}{\NAT@fullfalse\NAT@citetp}}

\let\NAT@orig@nmfmt\NAT@nmfmt
\def\NAT@posfmt#1{\NAT@orig@nmfmt{#1's}}

\makeatother

\usepackage{pifont}  

\DeclarePairedDelimiter\abs{\lvert}{\rvert}%
\DeclarePairedDelimiter\norm{\lVert}{\rVert}%

\makeatletter
\let\oldabs\abs
\def\abs{\@ifstar{\oldabs}{\oldabs*}}
\let\oldnorm\norm
\def\norm{\@ifstar{\oldnorm}{\oldnorm*}}
\makeatother

\def\Snospace~{\S{}}

\newcommand{\NC}{\mathsf{NC}}
\newcommand{\TC}{\mathsf{TC}}

\newcommand{\R}{\mathbb{R}}

\usepackage{chessboard}
\usepackage{skak}
\setchessboard{showmover=false}
\definecolor{move1}{rgb}{0, 0.3, 0.5}
\definecolor{move2}{rgb}{0.2, 0.5, 0.7}
\definecolor{move3}{rgb}{0.4, 0.7, 0.9}
\definecolor{move1red}{rgb}{0.5, 0.3, 0}
\definecolor{move2red}{rgb}{0.7, 0.5, 0.2}
\definecolor{move3red}{rgb}{0.9, 0.7, 0.4}
\definecolor{movegray}{rgb}{0.3, 0.3, 0.3}

\DeclareFixedFont{\ttb}{T1}{txtt}{bx}{n}{12} 
\DeclareFixedFont{\ttm}{T1}{txtt}{m}{n}{12}  

\definecolor{deepblue}{rgb}{0,0,0.5}
\definecolor{deepred}{rgb}{0.6,0,0}
\definecolor{deepgreen}{rgb}{0,0.5,0}
\definecolor{deepgray}{rgb}{0.5, 0.5, 0.5}

\usepackage{pythonhighlight}

\usepackage{tikz}
\usetikzlibrary{shapes.geometric,calc,positioning}

\colorlet{new}{blue}
\definecolor{lightred}{RGB}{255, 204, 203}
\definecolor{lightpurple}{RGB}{203, 195, 227}

\usepackage{pict2e}

\makeatletter
\newcommand{\bigcomp}{%
  \DOTSB
  \mathop{\vphantom{\sum}\mathpalette\bigcomp@\relax}%
  \slimits@
}
\newcommand{\bigcomp@}[2]{%
  \begingroup\m@th
  \sbox\z@{$#1\sum$}%
  \setlength{\unitlength}{0.9\dimexpr\ht\z@+\dp\z@}%
  \vcenter{\hbox{%
    \begin{picture}(1,1)
    \bigcomp@linethickness{#1}
    \put(0.5,0.5){\circle{1}}
    \end{picture}%
  }}%
  \endgroup
}
\newcommand{\bigcomp@linethickness}[1]{%
  \linethickness{%
      \ifx#1\displaystyle 2\fontdimen8\textfont\else
      \ifx#1\textstyle 1.65\fontdimen8\textfont\else
      \ifx#1\scriptstyle 1.65\fontdimen8\scriptfont\else
      1.65\fontdimen8\scriptscriptfont\fi\fi\fi 3
  }%
}

\makeatother

%% file: header.tex
\usepackage{amsfonts}
\usepackage{mdframed}
\usepackage{amsthm}
\usepackage{thmtools, thm-restate}
\usepackage{dsfont}
\usepackage{mleftright}
\usepackage{standalone}

\providecommand{\N}{\mathbb{N}}
\providecommand{\R}{\mathbb{R}}

\providecommand{\C}{\mathbb{C}}
\providecommand{\F}{\mathbb{F}}

\providecommand{\defeq}{\coloneq}

\providecommand{\gp}[1]{#1}

\renewcommand{\H}{\gp{H}}

\providecommand{\gop}{\cdot}

\providecommand{\abs}[1]{\mleft|#1\mright|}
\providecommand{\set}[1]{\mleft\{#1\mright\}}

\providecommand\norm[1]{\lVert#1\rVert}

\DeclareMathOperator{\dec}{dec}

\providecommand{\ie}{\textit{i.e., }}
\providecommand{\eg}{\textit{e.g., }}
\providecommand{\aka}{\textit{a.k.a., }}

%% file: tikz/champion.tex
\begin{tikzpicture}[>=Stealth, font=\LARGE]

\definecolor{cGk}{HTML}{1B4F72}
\definecolor{cGkm}{HTML}{6C3483}
\definecolor{cDots}{HTML}{B7950B}
\definecolor{cG2}{HTML}{D35400} 
\definecolor{cG1}{HTML}{C0392B}
\definecolor{cQuot}{HTML}{2471A3}
\definecolor{cArr}{HTML}{C0392B}

\def\qw{4.5}
\def\qh{0.9}
\def\qcx{3.8}

\def\Ya{0}
\def\Yb{1.5}
\def\Ydots{3}
\def\Yc{4.5} 
\def\Yd{6.0} 

\def\lh{0.6}
\def\vpad{0.3} 
\def\hpad{0.5} 

\pgfmathsetmacro{\ixl}{\qcx-\qw/2-\hpad}
\pgfmathsetmacro{\ixr}{\qcx+\qw/2+\hpad}
\pgfmathsetmacro{\iyb}{\Yd-\vpad}
\pgfmathsetmacro{\iyt}{\Yd+\qh+\vpad+\lh}

\pgfmathsetmacro{\nxl}{\ixl-\hpad}
\pgfmathsetmacro{\nxr}{\ixr+\hpad}
\pgfmathsetmacro{\nyb}{\Yc-\vpad}
\pgfmathsetmacro{\nyt}{\iyt+\vpad+\lh}

\pgfmathsetmacro{\dxl}{\nxl-\hpad}
\pgfmathsetmacro{\dxr}{\nxr+\hpad}
\pgfmathsetmacro{\dyb}{\Ydots-\vpad}
\pgfmathsetmacro{\dyt}{\nyt+\vpad+\lh}

\pgfmathsetmacro{\mxl}{\dxl-\hpad}
\pgfmathsetmacro{\mxr}{\dxr+\hpad}
\pgfmathsetmacro{\myb}{\Yb-\vpad}
\pgfmathsetmacro{\myt}{\dyt+\vpad+\lh}

\pgfmathsetmacro{\oxl}{\mxl-\hpad}
\pgfmathsetmacro{\oxr}{\mxr+\hpad}
\pgfmathsetmacro{\oyb}{\Ya-\vpad}
\pgfmathsetmacro{\oyt}{\myt+\vpad+\lh}

\fill[cGk!4,  rounded corners=14pt] (\oxl, \oyb) rectangle (\oxr, \oyt);
\draw[thick,  rounded corners=14pt, cGk]   (\oxl, \oyb) rectangle (\oxr, \oyt);

\fill[cGkm!4, rounded corners=12pt] (\mxl, \myb) rectangle (\mxr, \myt);
\draw[thick,  rounded corners=12pt, cGkm]  (\mxl, \myb) rectangle (\mxr, \myt);

\fill[cDots!4, rounded corners=10pt] (\dxl, \dyb) rectangle (\dxr, \dyt);
\draw[thick,  rounded corners=10pt,  cDots] (\dxl, \dyb) rectangle (\dxr, \dyt);

\fill[cG2!4, rounded corners=8pt] (\nxl, \nyb) rectangle (\nxr, \nyt);
\draw[thick, rounded corners=8pt,  cG2]   (\nxl, \nyb) rectangle (\nxr, \nyt);

\fill[cG1!4, rounded corners=6pt] (\ixl, \iyb) rectangle (\ixr, \iyt);
\draw[thick, rounded corners=6pt,  cG1]   (\ixl, \iyb) rectangle (\ixr, \iyt);

\node[anchor=north west, cGk,   font=\LARGE\bfseries] at (\oxl+0.25, \oyt-0.1) {$G_k$};
\node[anchor=north west, cGkm,  font=\LARGE\bfseries] at (\mxl+0.25, \myt-0.1) {$\trianglerighteq G_{k-1}$};
\node[anchor=north west, cDots, font=\LARGE\bfseries] at (\dxl+0.25, \dyt-0.1) {$\trianglerighteq \cdots$};
\node[anchor=north west, cG2,   font=\LARGE\bfseries] at (\nxl+0.25, \nyt-0.1) {$\trianglerighteq G_2$}; 
\node[anchor=north west, cG1,   font=\LARGE\bfseries] at (\ixl+0.25, \iyt-0.1) {$\trianglerighteq G_1$};

\foreach \ypos/\lab in {\Ya/{G_k / G_{k-1}}, \Yb/{G_{k-1} / G_{k-2}}, \Yc/{G_2 / G_1}, \Yd/{G_1}} {
  \draw[thick, rounded corners=4pt, cQuot, fill=white, fill opacity=0.85]
    (\qcx-\qw/2, \ypos) rectangle (\qcx+\qw/2, \ypos+\qh);
  \node[cQuot!90!black, text height=1.5ex, text depth=0.25ex] at (\qcx, \ypos+\qh/2-0.06) {$\lab$};
}

\node[cQuot!90!black, text height=1.5ex, text depth=0.25ex] at (\qcx, \Ydots+\qh/2-0.06) {\vdots};

\draw[thick, -Triangle, cArr] (\qcx, \Ya - 0.55) -- (\qcx, \Ya - 0.05);
\draw[thick, -Triangle, cArr] (\qcx, \Ya+\qh+0.05) -- (\qcx, \Yb - 0.05);
\draw[thick, -Triangle, cArr] (\qcx, \Yb+\qh+0.05) -- (\qcx, \Ydots - 0.05);
\draw[thick, -Triangle, cArr] (\qcx, \Ydots+\qh+0.05) -- (\qcx, \Yc - 0.05);
\draw[thick, -Triangle, cArr] (\qcx, \Yc+\qh+0.05) -- (\qcx, \Yd - 0.05); 

\newcommand{\rotsym}[1]{%
  \pgfmathsetmacro{\cy}{#1+\qh/2}%
  \pgfmathsetmacro{\rx}{\qcx+\qw/2+0.05}%
  
  \draw[cArr, thick, -Triangle]
    (\rx, {\cy - 0.15}) 
    .. controls (\rx + 0.35, {\cy - 0.15}) and (\rx + 0.35, {\cy + 0.15}) .. 
    (\rx, {\cy + 0.15});
}
\rotsym{\Ya}
\rotsym{\Yb}
\rotsym{\Yc}
\rotsym{\Yd} 

\node[anchor=east, cArr!80!black] at (\oxl-0.2, \Ya+\qh/2) {layer $1$};
\node[anchor=east, cArr!80!black] at (\oxl-0.2, \Yb+\qh/2) {layer $2$};
\node[anchor=east, cArr!80!black] at (\oxl-0.2, \Ydots+\qh/2) {\vdots};
\node[anchor=east, cArr!80!black] at (\oxl-0.2, \Yc+\qh/2) {layer $k-1$}; 
\node[anchor=east, cArr!80!black] at (\oxl-0.2, \Yd+\qh/2) {layer $k$};

\end{tikzpicture}

%% file: tikz/cosets.tex
\begin{tikzpicture}[>=Stealth,font=\large]

\draw[very thick, blue!30] (0,0) circle (2.0);

\fill (0,0) circle (1.5pt);
\node[below right=1pt] at (0,0) {$c$};

\fill (90:2.0) circle (2.5pt);
\node[above=2pt] at (90:2.0) {$x_1$};
\node[text=blue!70!black, above=0pt] at (90:2.4) {$\dec(x_1) = G_{k-1}$};

\fill (30:2.0) circle (2.5pt);
\node[above right=0pt] at (30:2.0) {$x_2$};
\node[text=orange!70!black, above right=0pt] at (30:2.5) {$\dec(x_2) = a\,G_{k-1}$};

\fill (-30:2.0) circle (2.5pt);
\node[below right=0pt] at (-30:2.0) {$x_3$};
\node[text=green!50!black, below right=0pt] at (-30:2.5) {$\dec(x_3) = b\,G_{k-1}$};

\fill (-90:2.0) circle (2.5pt);
\node[below=2pt] at (-90:2.0) {$x_4$};
\node[text=blue!70!black, below=0pt] at (-90:2.4) {$\dec(x_4) = G_{k-1}$};

\fill (210:2.0) circle (2.5pt);
\node[below left=0pt] at (210:2.0) {$x_5$};
\node[text=orange!70!black, left=0pt] at (210:2.5) {$\dec(x_5) = a\,G_{k-1}$};

\fill (150:2.0) circle (2.5pt);
\node[above left=0pt] at (150:2.0) {$x_6$};
\node[text=green!50!black, above left=0pt] at (150:2.5) {$\dec(x_6) = b\,G_{k-1}$};

\begin{scope}[xshift=6cm, yshift=0.7cm]

\draw[thick, rounded corners] (0, -0.8) rectangle (6.7, 1.8);
\node[font=\bfseries] at (3.4, 2.15) {$G$};

\fill[blue!12, rounded corners] (0.2, -0.6) rectangle (2.1, 1.6);
\fill[orange!12, rounded corners] (2.4, -0.6) rectangle (4.3, 1.6);
\fill[green!12, rounded corners] (4.6, -0.6) rectangle (6.5, 1.6);

\draw[thick, blue!60, rounded corners] (0.2, -0.6) rectangle (2.1, 1.6);
\draw[thick, orange!60, rounded corners] (2.4, -0.6) rectangle (4.3, 1.6);
\draw[thick, green!60, rounded corners] (4.6, -0.6) rectangle (6.5, 1.6);

\node[font=\bfseries, blue!70!black] at (1.15, 1.15) {$G_{k-1}$};
\node[font=\bfseries, orange!70!black] at (3.35, 1.15) {$a\,G_{k-1}$};
\node[font=\bfseries, green!50!black] at (5.55, 1.15) {$b\,G_{k-1}$};

\node[] at (1.15, 0.3) {$e, g_1, \ldots$};
\node[] at (3.35, 0.3) {$a, ag_1, \ldots$};
\node[] at (5.55, 0.3) {$b, bg_1, \ldots$};

\draw[decorate, decoration={brace, amplitude=4pt, mirror}, thick, blue!70!black]
    (0.2, -1.0) -- (2.1, -1.0) node[midway, below=5pt, text=blue!70!black] {$\dec([x_e])$};
\draw[decorate, decoration={brace, amplitude=4pt, mirror}, thick, orange!70!black]
    (2.4, -1.0) -- (4.3, -1.0) node[midway, below=5pt, text=orange!70!black] {$\dec([x_a])$};
\draw[decorate, decoration={brace, amplitude=4pt, mirror}, thick, green!50!black]
    (4.6, -1.0) -- (6.5, -1.0) node[midway, below=5pt, text=green!50!black] {$\dec([x_b])$};

\node[anchor=west] at (0, -2.4) {$[x_e] = \{x_1,\, x_4\}$};
\node[anchor=west] at (0, -3.0) {$[x_a] = \{x_2,\, x_5\}$};
\node[anchor=west] at (0, -3.6) {$[x_b] = \{x_3,\, x_6\}$};

\end{scope}

\end{tikzpicture}

%% file: tikz/construction.tex
\begin{tikzpicture}[>=Stealth, font=\LARGE]

\def\W{4.6}          
\def\LW{3.8}         
\def\LH{1.0}         
\def\LX{0.4}         
\def\Lbot{0.4}       
\def\gap{0.7}        
\def\dashpad{0.2}     
\def\Ltwo{2.4}       
\def\Lk{5.6}         
\def\toppad{0.3}     
\def\botpad{0.2}     
\def\sep{6.4}        

\pgfmathsetmacro{\dashbot}{\Ltwo - \dashpad}
\pgfmathsetmacro{\dashtop}{\Lk + \LH + \dashpad}
\pgfmathsetmacro{\H}{\dashtop + \toppad}
\pgfmathsetmacro{\midstate}{(\Ltwo + \LH + \Lk) / 2}

\definecolor{cOuter}{HTML}{196F3D}
\definecolor{cL1}{HTML}{2E86C1}
\definecolor{cL2}{HTML}{E74C3C}
\definecolor{cLk}{HTML}{8E44AD}
\definecolor{cDash}{HTML}{E74C3C}
\definecolor{cKappa}{HTML}{D35400}

\newcommand{\panel}[8]{
\begin{scope}[xshift=#1 cm]
  \draw[thick, rounded corners=6pt, cOuter] (0, 0) rectangle (\W, \H);
  \draw[thick, rounded corners=4pt, cL1, fill=cL1!8] (\LX, \Lbot) rectangle (\LX+\LW, \Lbot+\LH);
  \node[cL1!80!black] at (\LX+\LW/2, \Lbot+\LH/2) {state: $#2$};
  \draw[thick, rounded corners=5pt, dashed, cDash!60] (\LX-\dashpad, \dashbot) rectangle (\LX+\LW+\dashpad, \dashtop);
  \draw[thick, rounded corners=4pt, cL2, fill=cL2!8] (\LX, \Ltwo) rectangle (\LX+\LW, \Ltwo+\LH);
  \pgfmathsetmacro{\innerbot}{\Ltwo+\LH+0.15}
  \pgfmathsetmacro{\innertop}{\Lk-0.15}
  \pgfmathsetmacro{\innermid}{(\innerbot+\innertop)/2}
  \node at (\LX+\LW/2, {(\innerbot+\innermid)/2-0.1}) {$\vdots$};
  \node[cLk!80!black] at (\LX+\LW/2, \innermid) {state: $#3$};
  \node at (\LX+\LW/2, {(\innermid+\innertop)/2+0.3}) {$\vdots$};
  \draw[thick, rounded corners=4pt, cLk, fill=cLk!8] (\LX, \Lk) rectangle (\LX+\LW, \Lk+\LH);
  \node[anchor=south, cOuter] at (\W/2, \H+0.05) {state: $#4$};
  \if#51
    \node[anchor=east, cL1!80!black] at (-0.1, \Lbot+\LH/2) {layer $1$};
    \node[anchor=east, cL2!80!black] at (-0.1, \Ltwo+\LH/2) {layer $2$};
    \node[anchor=east] at (-0.6, {(\Ltwo+\LH+\Lk)/2+0.1}) {$\vdots$};
    \node[anchor=east, cLk!80!black] at (-0.1, \Lk+\LH/2) {layer $k$};
  \fi
  \if#61
    \draw[thick, ->, cKappa] (\W/2, \Lbot+\LH+0.05) -- (\W/2, \dashbot-0.05);
    \node[cKappa, right=2pt] at (\W/2, {\Lbot+\LH+(\dashbot-\Lbot-\LH)/2}) {$\kappa(h'\!,g)$};
  \fi
  \if#71
    \draw[thick, ->, gray!70!black] (\W/2, -0.35) -- (\W/2, -0.05);
    \node[gray!70!black] at (\W/2, -0.65) {$g = s(h)\,n$};
  \fi
  \node[font=\LARGE\bfseries] at (\W/2, -1.6) {(#8)};
\end{scope}
}

\panel{0}{h' \in H}{n' \in N}{g' = n'\,s(h') \in G}{1}{0}{1}{a}
\panel{\sep}{h'h}{n'}{n'\,s(h'h)}{0}{1}{0}{b}
\panel{2*\sep}{h'h}{n'\,\kappa(h'\!,g)}{n'\,\kappa(h'\!,g)\,s(h'h) = g'g}{0}{0}{0}{c}

\end{tikzpicture}